\documentclass[11pt]{article}

\usepackage[a4paper,top=3cm,bottom=3cm,left=3cm,right=3cm,marginparwidth=1.75cm]{geometry}
\usepackage{amsmath,amssymb,amsthm}
\usepackage{booktabs}
\usepackage{natbib}
\usepackage{algorithm}
\usepackage{algorithmic}
\usepackage{graphicx}
\usepackage{subcaption}
\usepackage{caption}
\usepackage{enumerate}
\usepackage{float}
\usepackage{authblk}
\usepackage{xcolor}

\usepackage[hidelinks]{hyperref}
\usepackage[nameinlink,noabbrev]{cleveref}

\newtheorem{theorem}{Theorem}

\newtheorem{lemma}{Lemma}
\newtheorem{corollary}{Corollary}

\newtheorem{assumption}{Assumption}

\newcommand{\E}{\mathbb{E}}
\renewcommand{\P}{\mathbb{P}}

\newcommand{\N}{\mathcal{N}}
\newcommand{\X}{\mathcal{X}}
\newcommand{\F}{\mathcal{F}}

\newcommand{\Var}{\operatorname{Var}}

\title{Gaussian Process Bandits with Paired Auxiliary Observations}

\author{%
\parbox{\textwidth}{\centering
Weibo Huang\textsuperscript{1,*} \quad
Qing Feng\textsuperscript{2,*} \quad
Ruihao Zhu\textsuperscript{3,\(\dagger\)} \quad
Cheng Hua\textsuperscript{1,\(\dagger\)}
\par\smallskip
{\footnotesize
\textsuperscript{1} Antai College of Economics and Management, Shanghai Jiao Tong University\par
\textsuperscript{2} School of Operations Research and Information Engineering, Cornell University\par
\textsuperscript{3} SC Johnson College of Business, Cornell University\par
\smallskip
3338725059@sjtu.edu.cn \quad
qf48@cornell.edu \quad
ruihao.zhu@cornell.edu \quad
cheng.hua@sjtu.edu.cn\par
}
}
}

\date{}

\begin{document}

\maketitle

\begin{abstract}
Bayesian optimization (BO) is widely used for expensive black-box optimization, yet observation noise makes each evaluation less informative and slows posterior learning. Standard BO learns from costly outcomes gathered during optimization. In many applications, however, low-cost auxiliary data are available beforehand. We introduce Auxiliary-Informed Gaussian Process Bandit (AI-GPB), a variance-reduction method that combines an offline estimate of the auxiliary mean with a correction coefficient learned from paired online observations. Notably, it exploits  correlation between paired observation errors without requiring the auxiliary mean to approximate the objective. We establish finite-sample GP confidence bounds and regret guarantees, including explicit cumulative and simple regret rates for Matérn kernels. Numerical experiments across continuous BO benchmarks with varied landscapes and dimensions show consistent reductions in cumulative and simple regret relative to vanilla GP-UCB, even when within-pair correlation is moderate and the auxiliary mean is estimated from noisy offline data.
\end{abstract}

\section{Introduction}
Black-box optimization with costly, noisy evaluations is widespread across science and engineering. A common approach in Bayesian optimization (BO) uses a Gaussian process (GP) posterior and an acquisition function to select evaluations sequentially. A widely used acquisition rule is GP-UCB, which balances exploration and exploitation through the posterior mean and uncertainty \citep{srinivas2009gaussian}.  However, observation noise limits the information gained from each evaluation, thereby slowing posterior learning and reducing the effectiveness of subsequent queries.

Paired auxiliary observations offer a way to reduce this information loss. For example, a production run may require a costly quality test while generating routine process data \citep{dreyfus2022virtual}.  In chemistry, an experiment may provide product yield together with inexpensive colour measurements from the same reaction \citep{leonov2024integrated}.  Historical records may also contain such data without corresponding quality or yield measurements.

Classical control variates use correlation between paired observations to reduce variance, but generally assume that the auxiliary mean is known \citep{lavenberg1981perspective,nelson1990control}. Related bandit formulations use a finite action set and repeated arm pulls, while BO with auxiliary information generally relies on predictive relationships between response functions \citep{ji2025multi,kandasamy2016gaussian}. Two challenges remain in continuous GP bandits. First, the auxiliary mean must be estimated from finite, noisy offline data, introducing a spatially varying bias into the function learned by the GP. Second, continuous BO rarely revisits the same point, so the auxiliary weight must be learned from observations along the adaptive query path.

We propose Auxiliary-Informed Gaussian Process Bandit (AI-GPB), a two-stage algorithm that combines offline auxiliary data with paired online outcomes to reduce observation variance. In the offline phase, we estimate the auxiliary mean function to center the online auxiliary output, not to predict the objective. In the online phase, we estimate the correction coefficient from the paired observations and update the GP posterior with the corrected costly outcomes. Figure~\ref{fig:paired-correction-illustration} illustrates how the resulting noise reduction increases posterior precision and can change the next GP-UCB query. We summarize the main contributions of this paper as follows.

\begin{figure}[t]
    \centering
    \includegraphics[width=\textwidth]{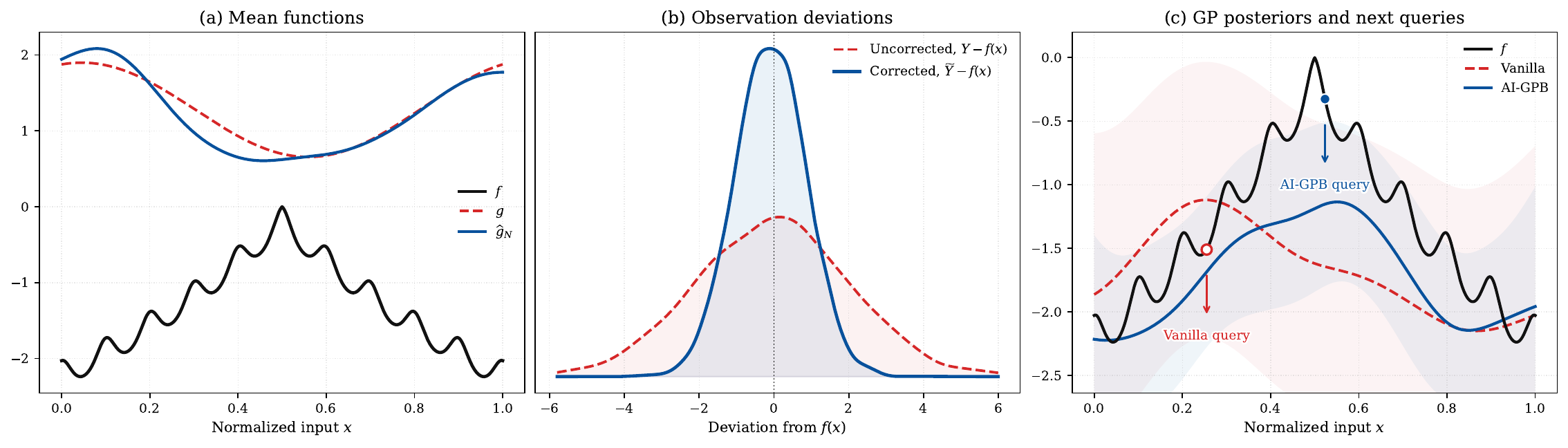}
    \caption{\textbf{Mechanism of AI-GPB.}
    The example uses a one-dimensional cross-section of the Ackley function with \(\rho=0.9\).  Panel (a) shows \(f\), \(g\), and the offline estimate \(\widehat g_N\). Panel (b) compares the uncorrected and corrected observation errors. Panel (c) shows the GP posteriors and next queries of Vanilla GP-UCB and AI-GPB after the same initial queries. Shading denotes 90\% posterior bands.}
    \label{fig:paired-correction-illustration}
\end{figure}

\paragraph{Paired variance reduction.}
We formulate a continuous GP bandit framework in which the auxiliary mean function is estimated from noisy offline data and the correction coefficient is learned from paired online observations. The framework requires no predictive relationship between the auxiliary mean and the objective and explicitly accounts for offline estimation error across the search space.

\paragraph{Regret guarantees with learned correction.}
We establish GP confidence bounds and cumulative regret guarantees for a stabilized version of AI-GPB. The analysis quantifies how offline estimation error affects coefficient learning under adaptive sampling. For Matérn kernels, we derive explicit rates for cumulative and simple regret. We further identify a sufficient scaling of the offline sample size under which the leading oracle gain from variance reduction is retained.

\paragraph{Empirical gains in continuous BO.}
Across benchmarks with varied landscapes and dimensions, AI-GPB reduces both cumulative and simple regret, with generally larger gains at higher levels of within-pair correlation. These improvements persist when the auxiliary mean is learned from noisy offline data and constructed independently of the objective. A controlled study of the offline budget further shows how improved estimation of \(g\) narrows the gap to the known-\(g\) reference.

\section{Related Work}
Our work connects variance reduction from paired stochastic outputs with Gaussian process bandit optimization.  We first review paired-output variance reduction in simulation and bandit settings, and then discuss GP bandits and Bayesian optimization with auxiliary information.

\paragraph{Paired simulation outputs.}
In stochastic simulation, the output depends on a decision \(x\) and an exogenous random input \(\xi\). Common-random-number (CRN) methods use the same realizations of \(\xi\) across different decisions, thereby reducing the variance of estimated performance differences \citep{kleijnen1975antithetic,pearce2022bayesian}. Control variates use a correlated auxiliary variable with known mean to reduce estimator variance \citep{lavenberg1981perspective,nelson1990control}. Multifidelity methods extend this idea to expensive and inexpensive simulators evaluated with shared random inputs \citep{peherstorfer2016optimal,gorodetsky2020generalized}. We study paired expensive and auxiliary outputs in GP bandit optimization and use their dependence to reduce observation noise.

\paragraph{Variance reduction in bandits.} 
Related variance-reduction ideas have been applied to sequential bandit problems. Verma and Hanawal apply control variates with known auxiliary means to stochastic multi-armed bandits \citep{verma2021stochastic}.  MLA-UCB adapts prediction-powered inference to finite-arm bandits using ML-generated surrogates observed offline and online \citep{angelopoulos2023prediction,angelopoulos2023ppi++,ji2025multi}. Verma et al. exploit correlated auxiliary feedback to obtain tighter confidence bounds in parameterized bandits \citep{verma2023exploiting}. In our setting, the auxiliary mean is learned over a continuous domain from noisy offline observations, and we analyze how its spatial error affects online coefficient learning, GP confidence bounds and regret.

\paragraph{Gaussian process bandits.}
Gaussian process bandits use kernel structure to learn an unknown reward function over a large or continuous domain.  Because each observation changes uncertainty beyond the queried location, reducing observation noise can also alter subsequent queries.  GP-UCB selects queries using kernel confidence bounds, and its cumulative-regret guarantees depend on both the kernel and the observation-noise level
\citep{srinivas2009gaussian,chowdhury2017kernelized,
vakili2021information,iwazaki2025improved}.  We build on this framework to analyze GP bandits with paired auxiliary observations.

\paragraph{Bayesian optimization with auxiliary information.}
Bayesian optimization can use auxiliary data by modeling relationships between response functions. Multifidelity and multi-information-source methods jointly model the objective and cheaper response functions, while transfer and multi-task BO use related tasks or offline simulators to inform optimization of the target function \citep{kandasamy2016gaussian,poloczek2017multi, letham2019bayesian,wang2024pre}.  More recently, PA-GP-UCB models the objective and an ML prediction function as correlated outputs of a multi-task GP while assuming independent observation noises \citep{chen2026gaussian}. By comparison, AI-GPB obtains variance reduction from correlation between paired observation errors and does not require the auxiliary mean to approximate the objective.

\section{Problem Setup}

\subsection{Paired Observations}
\label{subsec:paired-observations}

Let \(\X\subset\mathbb R^d\) be a compact domain, and let \(\xi\sim P_\xi\) denote a random input.  For each decision \(x\in\X\), a joint observation mechanism generates an expensive outcome \(Y(x,\xi)\) and a cheap auxiliary observation \(Z(x,\xi)\). The two components are simultaneously generated and observed as a pair during online optimization.

Their mean functions are
\begin{equation}
\label{eq:def-mean-functions}
\begin{aligned}
    f(x)
    &:=
    \E_\xi\!\left[Y(x,\xi)\right],\\
    g(x)
    &:=
    \E_\xi\!\left[Z(x,\xi)\right].
\end{aligned}
\end{equation}
The objective is to maximize the expensive mean response \(f\) over \(\X\). The auxiliary mean \(g\) is not itself an optimization target.

We define the centered stochastic fluctuations by
\begin{equation}
\label{eq:def-centered-fluctuations}
\begin{aligned}
    \varepsilon^Y(x,\xi)
    &:=
    Y(x,\xi)-f(x),\\
    \varepsilon^Z(x,\xi)
    &:=
    Z(x,\xi)-g(x).
\end{aligned}
\end{equation}
By construction, for every \(x\in\X\),
\begin{equation}
\label{eq:centered-fluctuations-mean-zero}
    \E_\xi\!\left[\varepsilon^Y(x,\xi)\right]
    =
    \E_\xi\!\left[\varepsilon^Z(x,\xi)\right]
    =
    0.
\end{equation}

Here, \(\xi\) denotes the joint random input underlying the paired observations. Their dependence will be characterized through the covariance of their centered fluctuations. For instance, \(Y\) may represent a high-fidelity simulation or a costly measured outcome, whereas \(Z\) may be a low-fidelity output, an ML-generated score, or an inexpensive auxiliary measurement associated with the same unit or scenario.

We do not require \(g\) to approximate \(f\) or share its maximizer. The relevant information is the dependence between the paired fluctuations \(\varepsilon^Y\) and \(\varepsilon^Z\).

\subsection{Offline Data and Estimation of \(g\)}
\label{subsec:offline-data}

Before online optimization begins, we have access to \(N\) auxiliary observations
\begin{equation}
\label{eq:def-offline-data}
    \mathcal D_{\mathrm{off}}
    :=
    \left\{
        \left(x_i^{\mathrm{off}},Z_i^{\mathrm{off}}\right)
    \right\}_{i=1}^N,
    \qquad
    Z_i^{\mathrm{off}}
    :=
    Z\!\left(x_i^{\mathrm{off}},\xi_i^{\mathrm{off}}\right),
    \qquad
    \xi_i^{\mathrm{off}}
    \stackrel{\mathrm{i.i.d.}}{\sim}
    P_\xi.
\end{equation}
Constructing \(\mathcal D_{\mathrm{off}}\) requires no expensive outcomes. The offline observations follow the same \(Z(x,\xi)\) mechanism and random-input distribution \(P_\xi\) as their online counterparts, while the offline random inputs are independent of those used online.

Using \(\mathcal D_{\mathrm{off}}\), an offline learning procedure constructs an estimate \(\widehat g_N:\X\to\mathbb R\) of \(g\).  The procedure may use, for example, a regression or another machine-learning model. Once constructed, \(\widehat g_N\) is held fixed throughout online optimization. We write
\begin{equation}
\label{eq:def-offline-g-error}
    \widehat g_N(x)
    =
    g(x)+\delta_N(x),
    \qquad x\in\X,
\end{equation}
where \(\delta_N(x)\) is the error in estimating the mean \(g(x)\), whereas \(\varepsilon^Z(x,\xi)\) is the stochastic fluctuation of a single observation around that mean.

\subsection{Online Paired Observations and Filtration}
\label{subsec:online-observations}

Let \(\F_0\) be the sigma-algebra generated by the offline dataset
\(\mathcal D_{\mathrm{off}}\). Both \(\widehat g_N\) and \(\delta_N\) are therefore \(\F_0\)-measurable.

At round \(t\geq 1\), the query \(x_t\in\X\) is selected based only on
\(\F_{t-1}\) and is therefore \(\F_{t-1}\)-measurable.  A fresh random input
\(\xi_t\sim P_\xi\), independent of \(\F_{t-1}\), is then realized, and the paired outcomes
\begin{equation}
\label{eq:def-online-paired-observations}
\begin{aligned}
    Y_t
    &:=
    Y(x_t,\xi_t)
    =
    f(x_t)+\varepsilon_t^Y,\\
    Z_t
    &:=
    Z(x_t,\xi_t)
    =
    g(x_t)+\varepsilon_t^Z
\end{aligned}
\end{equation}
are observed, where
\[
    \varepsilon_t^Y
    :=
    \varepsilon^Y(x_t,\xi_t),
    \qquad
    \varepsilon_t^Z
    :=
    \varepsilon^Z(x_t,\xi_t).
\]
After observing the pair, the information is updated as
\begin{equation}
\label{eq:def-online-filtration}
    \F_t
    :=
    \sigma\!\left(\F_{t-1},Y_t,Z_t\right).
\end{equation}
By the independence of \(\xi_t\) from \(\F_{t-1}\) and the centering in
\eqref{eq:def-centered-fluctuations},
\begin{equation}
\label{eq:online-conditional-centering}
    \E\!\left[\varepsilon_t^Y\mid\F_{t-1}\right]
    =
    \E\!\left[\varepsilon_t^Z\mid\F_{t-1}\right]
    =
    0.
\end{equation}
We center \(Z_t\) by its offline mean estimate and define the auxiliary residual
\begin{equation}
\label{eq:def-online-residual}
    \widetilde Z_t
    :=
    Z_t-\widehat g_N(x_t)
    =
    \varepsilon_t^Z-\delta_N(x_t).
\end{equation}
Here, \(\varepsilon_t^Z\) is the stochastic fluctuation at round \(t\), and \(\delta_N(x_t)\) is the \(\F_{t-1}\)-measurable offline estimation error. When \(\widehat g_N=g\), the estimation error vanishes and \(\widetilde Z_t=\varepsilon_t^Z\).

\subsection{Budget Structure and Performance Measures}
\label{subsec:budget-performance}

We distinguish the offline auxiliary sample size \(N\) from the online horizon \(T\). Auxiliary observations can often be collected in large numbers before optimization because they do not require expensive outcomes. Each online round produces a pair \((Y_t,Z_t)\). Observing \(Z_t\) adds little to the cost
of observing \(Y_t\). Regret is accumulated over the \(T\) online rounds. Let \(x^\star\in\arg\max_{x\in\X}f(x)\) be a maximizer of \(f\).  The instantaneous regret at round \(t\) is
\begin{equation}
\label{eq:def-instantaneous-regret}
    r_t
    :=
    f(x^\star)-f(x_t).
\end{equation}
Our primary performance measure is the cumulative regret
\begin{equation}
\label{eq:def-cumulative-regret}
    R_T
    :=
    \sum_{t=1}^T r_t.
\end{equation}
The simple regret after \(t\) rounds is
\begin{equation}
\label{eq:def-simple-regret}
    S_t
    :=
    \min_{1\leq s\leq t}r_s
    =
    f(x^\star)-\max_{1\leq s\leq t}f(x_s),
    \qquad t=1,\ldots,T.
\end{equation}
It measures the quality of the best point queried by round \(t\). By construction, \(S_T\leq R_T/T\).  These regret quantities are defined using the latent mean function \(f\) and are not part of the information available to the algorithm.

\subsection{Standing Assumptions}
\label{subsec:standing-assumptions}

The main analysis is developed under the following assumptions. Additional conditions are stated with the results that require them.

\begin{assumption}[Kernel and objective regularity]
\label{assumption:kernel-objective}
Let \(k:\X\times\X\to\mathbb R\) be a continuous positive definite kernel normalized so that
\[
    k(x,x)\leq 1,
    \qquad x\in\X.
\]
The objective \(f\) belongs to the associated reproducing kernel Hilbert space
\(\mathcal H_k\) and satisfies
\[
    \|f\|_{\mathcal H_k}\leq B
\]
for some \(B>0\).
\end{assumption}

\begin{assumption}[Paired stochastic fluctuations]
\label{assumption:paired-fluctuations}
For every round \(t\geq 1\), conditional on \(\F_{t-1}\), the paired
fluctuations satisfy
\begin{equation}
\label{eq:paired-noise-distribution}
    \begin{pmatrix}
        \varepsilon_t^Y\\
        \varepsilon_t^Z
    \end{pmatrix}
    \sim
    \N\!\left(
        \begin{pmatrix}
            0\\
            0
        \end{pmatrix},
        \Sigma
    \right),
    \qquad
    \Sigma
    :=
    \begin{pmatrix}
        \sigma_Y^2 & \sigma_{YZ}\\
        \sigma_{YZ} & \sigma_Z^2
    \end{pmatrix}
    \succ 0.
\end{equation}
The covariance matrix \(\Sigma\) is fixed across query locations and online rounds.
\end{assumption}
The covariance \(\sigma_{YZ}\) captures the dependence within each observed pair. We denote the corresponding correlation by
\begin{equation}
\label{eq:def-paired-correlation}
    \rho
    :=
    \frac{\sigma_{YZ}}{\sigma_Y\sigma_Z}.
\end{equation}

\begin{assumption}[Offline estimation accuracy]
\label{assumption:offline-estimation}
There exist deterministic quantities \(e_N\geq 0\) and \(\alpha_N\in[0,1)\) such that, with probability at least \(1-\alpha_N\) over the offline dataset,
\begin{equation}
\label{eq:offline-uniform-error}
    \|\widehat g_N-g\|_\infty
    =
    \|\delta_N\|_\infty
    :=
    \sup_{x\in\X}|\delta_N(x)|
    \leq e_N.
\end{equation}
\end{assumption}

Assumption~\ref{assumption:offline-estimation} controls the offline estimation error without requiring \(\widehat g_N\) to be unbiased for \(g\). The next section constructs an observation-level correction using \(\widetilde Z_t\) and incorporates it into GP-UCB.

\section{Method}

\subsection{Paired Correction}
\label{subsec:paired-correction}

We first consider the reference case in which \(\widehat g_N=g\). The
auxiliary residual then satisfies
\[
    \widetilde Z_t
    =
    Z_t-g(x_t)
    =
    \varepsilon_t^Z.
\]
Let \(\lambda\) be a coefficient chosen before observing the pair at round \(t\). We define the corrected outcome as
\begin{equation}
\label{eq:def-corrected-outcome-known-g}
    \widetilde Y_t(\lambda)
    :=
    Y_t-\lambda\widetilde Z_t
    =
    f(x_t)+\varepsilon_t^Y-\lambda\varepsilon_t^Z.
\end{equation}
Its conditional mean is \(f(x_t)\), and its conditional variance is
\begin{equation}
\label{eq:def-corrected-noise-variance}
    \eta_\lambda^2
    :=
    \Var\!\left(
        \widetilde Y_t(\lambda)
        \mid \F_{t-1}
    \right)
    =
    \sigma_Y^2
    -2\lambda\sigma_{YZ}
    +\lambda^2\sigma_Z^2.
\end{equation}
The variance is minimized by
\begin{equation}
\label{eq:def-oracle-coefficient}
    \lambda^\star
    :=
    \frac{\sigma_{YZ}}{\sigma_Z^2}.
\end{equation}
The resulting variance is
\begin{equation}
\label{eq:def-oracle-variance}
    \eta_\star^2
    :=
    \eta_{\lambda^\star}^2
    =
    \sigma_Y^2
    -
    \frac{\sigma_{YZ}^2}{\sigma_Z^2}
    =
    \sigma_Y^2(1-\rho^2).
\end{equation}
Equations~\eqref{eq:def-corrected-outcome-known-g}--%
\eqref{eq:def-oracle-variance} give the classical control-variate correction \citep{nelson1990control}. Here, \(\widetilde Z_t\) is the zero-mean control and \(\lambda^\star\) is the variance-minimizing coefficient.  The correction uses dependence between the paired fluctuations and does not require \(g\) to approximate \(f\).

In practice, the unknown \(g(x)\) in the auxiliary residual is replaced by the offline estimate \(\widehat g_N(x)=g(x)+\delta_N(x)\). Using
\(\widetilde Z_t=Z_t-\widehat g_N(x_t)\), the corrected outcome becomes
\begin{equation}
\label{eq:def-corrected-outcome-estimated-g}
\begin{aligned}
    \widetilde Y_t(\lambda)
    &=
    Y_t-\lambda\widetilde Z_t\\
    &=
    f(x_t)
    +\lambda\delta_N(x_t)
    +\varepsilon_t^Y
    -\lambda\varepsilon_t^Z.
\end{aligned}
\end{equation}
The corrected outcome therefore has conditional mean
\(f(x_t)+\lambda\delta_N(x_t)\) and conditional variance \(\eta_\lambda^2\).
This spatial mean shift can persist as online observations accumulate and alter subsequent GP-UCB queries. The next subsection describes how the correction coefficient is estimated and updated using the online observations.

\subsection{Correction Coefficient}
\label{subsec:correction-coefficient}

The oracle coefficient in \eqref{eq:def-oracle-coefficient} is generally unknown. When \(\widehat g_N=g\), the auxiliary residual satisfies \(\widetilde Z_t=\varepsilon_t^Z\). Using the conditional centering in
\eqref{eq:online-conditional-centering}, we have
\begin{equation}
\label{eq:coefficient-moment-identities}
\begin{aligned}
    \E\!\left[
        Y_t\widetilde Z_t
        \mid \F_{t-1}
    \right]
    &=
    \sigma_{YZ},\\
    \E\!\left[
        \widetilde Z_t^2
        \mid \F_{t-1}
    \right]
    &=
    \sigma_Z^2.
\end{aligned}
\end{equation}
Replacing the expectations in
\eqref{eq:coefficient-moment-identities} with averages over the observed pairs gives the online estimator
\begin{equation}
\label{eq:def-online-coefficient}
    \widehat\lambda_t
    :=
    \frac{
        \sum_{s=1}^t Y_s\widetilde Z_s
    }{
        \sum_{s=1}^t \widetilde Z_s^2
    }.
\end{equation}
Since the noise variances and covariance do not vary with \(x\), the estimator combines observations from different locations without requiring repeated queries. It uses only the observed pairs and does not require \(\sigma_{YZ}\) or \(\sigma_Z^2\) as inputs.  When
\(\widehat g_N=g\), it estimates \(\lambda^\star\).  With estimated centering, \(\widetilde Z_t\) also contains \(-\delta_N(x_t)\), which may shift \(\widehat\lambda_t\) away from \(\lambda^\star\).  We quantify this effect in the analysis.

At round \(t\), the coefficient used to select \(x_t\) is
\(\widehat\lambda_{t-1}\), which is computed before observing
\((Y_t,Z_t)\). We apply this coefficient to the full history and define
\begin{equation}
\label{eq:def-recorrected-history}
    \widetilde Y_s^{(t)}
    :=
    Y_s-\widehat\lambda_{t-1}\widetilde Z_s,
    \qquad
    s=1,\ldots,t-1.
\end{equation}
Thus, all past observations are corrected using the same coefficient before the round-\(t\) GP update.  After observing \((Y_t,Z_t)\), the estimator is updated to \(\widehat\lambda_t\) for the next round. The next subsection uses the corrected history to construct the GP-UCB acquisition rule.

\subsection{GP-UCB Update}
\label{subsec:gp-ucb-update}

At round \(t\), we collect the corrected responses in
\[
    \widetilde{\mathbf Y}_{t-1}^{(t)}
    :=
    \left(
        \widetilde Y_1^{(t)},
        \ldots,
        \widetilde Y_{t-1}^{(t)}
    \right)^\top.
\]
Let
\[
    \mathbf k_{t-1}(x)
    :=
    \left(
        k(x_1,x),
        \ldots,
        k(x_{t-1},x)
    \right)^\top,
    \qquad
    \mathbf K_{t-1}
    :=
    \left[k(x_i,x_j)\right]_{i,j=1}^{t-1}.
\]
We use a zero-mean GP with kernel \(k\).  At each round, its scalar noise variance \(\widehat\eta_{t-1}^2\) is estimated from the corrected history by maximizing the marginal likelihood with \(k\) held fixed. Accordingly, Algorithm~\ref{alg:ppi-gp-ucb} does not require \(\Sigma\) as an input.  The resulting posterior mean and variance are
\begin{equation}
\label{eq:corrected-gp-posterior}
\begin{aligned}
    \mu_{t-1}(x)
    &:=
    \mathbf k_{t-1}(x)^\top
    \left(
        \mathbf K_{t-1}
        +
        \widehat\eta_{t-1}^2\mathbf I_{t-1}
    \right)^{-1}
    \widetilde{\mathbf Y}_{t-1}^{(t)},\\
    s_{t-1}^2(x)
    &:=
    k(x,x)
    -
    \mathbf k_{t-1}(x)^\top
    \left(
        \mathbf K_{t-1}
        +
        \widehat\eta_{t-1}^2\mathbf I_{t-1}
    \right)^{-1}
    \mathbf k_{t-1}(x).
\end{aligned}
\end{equation}
Given an exploration parameter \(\beta_t>0\), the next query is selected by
\begin{equation}
\label{eq:ppi-gp-ucb-acquisition}
    x_t
    \in
    \arg\max_{x\in\X}
    \left\{
        \mu_{t-1}(x)
        +
        \sqrt{\beta_t}\,s_{t-1}(x)
    \right\}.
\end{equation}
The complete AI-GPB procedure is summarized in Algorithm~\ref{alg:ppi-gp-ucb}. It uses the standard GP-UCB acquisition rule with a posterior constructed from corrected observations. The next section analyzes a stabilized version and shows how offline mean estimation and online coefficient learning enter the cumulative-regret guarantee.

\begin{algorithm}[t]
\caption{Auxiliary-Informed Gaussian Process Bandit (AI-GPB)}
\label{alg:ppi-gp-ucb}
\textbf{Input:} search space \(\X\), horizon \(T\), initial size \(n_0\), offline estimate \(\widehat g_N\), kernel \(k\), and exploration parameters
\(\{\beta_t\}_{t=1}^T\). \\
\textbf{Initialize:} select \(n_0\) initial points and observe their paired
outcomes. Compute
\(\widetilde Z_i=Z_i-\widehat g_N(x_i)\) for \(i=1,\ldots,n_0\).

\begin{algorithmic}[1]
    \FOR{$t=n_0,\ldots,T-1$}
        \STATE \textbf{Correction update}
        \STATE Compute \(\widehat\lambda_t\) using
        \eqref{eq:def-online-coefficient}
        \STATE Correct the observation history by setting \(\widetilde Y_s^{(t+1)}
        \leftarrow
        Y_s-\widehat\lambda_t\widetilde Z_s\)
        for \(s=1,\ldots,t\)
        \STATE \textbf{GP update}
        \STATE Estimate \(\widehat\eta_t^2\) by maximizing the marginal
        likelihood of the corrected history
        \STATE Compute \(\mu_t(\cdot)\) and \(s_t^2(\cdot)\) using
        \eqref{eq:corrected-gp-posterior}
        \STATE \textbf{Acquisition and query}
        \STATE
        \(x_{t+1}
        \leftarrow
        \arg\max_{x\in\X}
        \left\{
            \mu_t(x)+\sqrt{\beta_{t+1}}\,s_t(x)
        \right\}\)
        \STATE Observe the paired outcomes
        \((Y_{t+1},Z_{t+1})\)
        \STATE Compute
        \(\widetilde Z_{t+1}
        \leftarrow
        Z_{t+1}-\widehat g_N(x_{t+1})\)
    \ENDFOR
\end{algorithmic}
\end{algorithm}
\section{Theoretical Analysis}
\label{sec:performance-guarantees}

We first control the online coefficient estimator. We then establish GP confidence bounds and cumulative-regret guarantees for a stabilized version of AI-GPB.

\subsection{Coefficient Estimation}
\label{subsec:coefficient-estimation}

We begin by identifying the coefficient learned from the online paired observations. Conditional on the offline dataset, \(x_t\) and
\(\delta_N(x_t)\) are determined before \((Y_t,Z_t)\) is observed.  Using
\(\widetilde Z_t=\varepsilon_t^Z-\delta_N(x_t)\), we obtain
\begin{equation}
\label{eq:coefficient-conditional-moments}
\begin{aligned}
    \E\!\left[
        Y_t\widetilde Z_t
        \mid\F_{t-1}
    \right]
    &=
    \sigma_{YZ}
    -
    f(x_t)\delta_N(x_t),\\
    \E\!\left[
        \widetilde Z_t^2
        \mid\F_{t-1}
    \right]
    &=
    \sigma_Z^2
    +
    \delta_N^2(x_t).
\end{aligned}
\end{equation}
Taking the ratio of the averages of these conditional moments over the first \(t\) rounds gives the population counterpart of \(\widehat\lambda_t\):
\begin{equation}
\label{eq:def-population-coefficient}
    \lambda_{N,t}
    :=
    \frac{
        \sigma_{YZ}
        -
        \frac{1}{t}
        \sum_{s=1}^t
        f(x_s)\delta_N(x_s)
    }{
        \sigma_Z^2
        +
        \frac{1}{t}
        \sum_{s=1}^t
        \delta_N^2(x_s)
    }.
\end{equation}
The coefficient \(\lambda_{N,t}\) differs from \(\lambda^\star\) only through the offline errors \(\delta_N(x_s)\) at the queried locations. It therefore equals \(\lambda^\star\) when \(\widehat g_N=g\).

Subtracting the conditional means in \eqref{eq:coefficient-conditional-moments} gives martingale differences, despite the adaptive choice of query locations.  This yields the following concentration result.

\begin{theorem}[Coefficient concentration]
\label{theorem:online-coefficient-concentration}
Suppose Assumptions~\ref{assumption:kernel-objective} and
\ref{assumption:paired-fluctuations} hold.  Condition on an offline dataset
satisfying
\[
    \|\delta_N\|_\infty\leq e_N.
\]
For any horizon \(T\geq2\) and \(\alpha\in(0,1)\), there exists a constant
\(C>0\), depending only on \(B\), \(e_N\), and \(\Sigma\), such that with probability at least \(1-\alpha\) over the online observations,
\begin{equation}
\label{eq:online-coefficient-concentration}
    \left|
        \widehat\lambda_t-\lambda_{N,t}
    \right|
    \leq
    a_t
    :=
    C
    \sqrt{
        \frac{\log(T/\alpha)}{t}
    },
    \qquad
    C\log(T/\alpha)\leq t\leq T.
\end{equation}
\end{theorem}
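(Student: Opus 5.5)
The plan is to write \(\widehat\lambda_t-\lambda_{N,t}\) as a ratio perturbation and control the numerator and denominator separately with martingale concentration. Define
\[
    A_t:=\frac1t\sum_{s=1}^t Y_s\widetilde Z_s,\qquad
    D_t:=\frac1t\sum_{s=1}^t \widetilde Z_s^2,
\]
with conditional-mean counterparts \(\bar A_t:=\sigma_{YZ}-\frac1t\sum_{s\le t} f(x_s)\delta_N(x_s)\) and \(\bar D_t:=\sigma_Z^2+\frac1t\sum_{s\le t}\delta_N^2(x_s)\). Then \(\widehat\lambda_t=A_t/D_t\) and \(\lambda_{N,t}=\bar A_t/\bar D_t\). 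The elementary identity
\[
    \frac{A_t}{D_t}-\frac{\bar A_t}{\bar D_t}
    =\frac{(A_t-\bar A_t)-\lambda_{N,t}(D_t-\bar D_t)}{D_t}
\]
reduces the claim to three facts: \(|A_t-\bar A_t|\) and \(|D_t-\bar D_t|\) are of order \(\sqrt{\log(T/\alpha)/t}\); \(|\lambda_{N,t}|\) is bounded by a constant depending on \((B,e_N,\Sigma)\); and \(D_t\ge\sigma_Z^2/2\).

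The bound on \(|\lambda_{N,t}|\) is deterministic. By Assumption~\ref{assumption:kernel-objective} and the reproducing property, \(|f(x)|\le\|f\|_{\mathcal H_k}\sqrt{k(x,x)}\le B\). Since \(\bar D_t\ge\sigma_Z^2\), we get \(|\lambda_{N,t}|\le(|\sigma_{YZ}|+Be_N)/\sigma_Z^2\).

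For the fluctuations, expand using \(Y_s=f(x_s)+\varepsilon^Y_s\) and \(\widetilde Z_s=\varepsilon^Z_s-\delta_N(x_s)\):
\[
    Y_s\widetilde Z_s-\E[Y_s\widetilde Z_s\mid\F_{s-1}]
    =(\varepsilon^Y_s\varepsilon^Z_s-\sigma_{YZ})
    +f(x_s)\varepsilon^Z_s
    -\delta_N(x_s)\varepsilon^Y_s,
\]
\[
    \widetilde Z_s^2-\E[\widetilde Z_s^2\mid\F_{s-1}]
    =\bigl((\varepsilon^Z_s)^2-\sigma_Z^2\bigr)
    -2\delta_N(x_s)\varepsilon^Z_s.
\]
Each term is a martingale difference with respect to \((\F_s)\). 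In the linear terms, the coefficients \(f(x_s)\) and \(\delta_N(x_s)\) are \(\F_{s-1}\)-measurable and bounded by \(B\) and \(e_N\), while the conditionally Gaussian noise (Assumption~\ref{assumption:paired-fluctuations}) makes them conditionally sub-Gaussian with variance proxy at most \((B^2+e_N^2)\max(\sigma_Y^2,\sigma_Z^2)\). Azuma–Hoeffding for sub-Gaussian increments then gives, for a fixed \(t\), deviation at most \(c\sqrt{t\log(1/\alpha')}\) for the sum.

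The quadratic terms \(\varepsilon^Y_s\varepsilon^Z_s-\sigma_{YZ}\) and \((\varepsilon^Z_s)^2-\sigma_Z^2\) are conditionally i.i.d.\ centered sub-exponential, because their conditional law does not depend on \(\F_{s-1}\). Bernstein's inequality for sub-exponential martingale differences gives deviation at most \(c(\sqrt{t\log(1/\alpha')}+\log(1/\alpha'))\). Dividing by \(t\), the second piece \(\log(1/\alpha')/t\) is dominated by \(\sqrt{\log(1/\alpha')/t}\) once \(t\gtrsim\log(1/\alpha')\). This is where the lower restriction \(t\ge C\log(T/\alpha)\) enters.

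Next take \(\alpha'=\alpha/(cT)\), where \(c\) counts the finitely many terms, and apply a union bound over the terms and over \(t=1,\dots,T\). This makes all the bounds hold simultaneously with probability \(1-\alpha\), with \(\log(T/\alpha)\) in place of \(\log(1/\alpha')\) after enlarging constants. On this event, \(|D_t-\bar D_t|\le c'\sqrt{\log(T/\alpha)/t}\le\sigma_Z^2/2\) once \(t\ge C\log(T/\alpha)\) with \(C\ge 4c'^2/\sigma_Z^4\), so \(D_t\ge\bar D_t-\sigma_Z^2/2\ge\sigma_Z^2/2\). Substituting into the ratio identity yields
\[
    |\widehat\lambda_t-\lambda_{N,t}|\le\frac{2}{\sigma_Z^2}\bigl(1+|\lambda_{N,t}|\bigr)c'\sqrt{\log(T/\alpha)/t}.
\]
Enlarging \(C\) to cover both this factor and the range condition gives \eqref{eq:online-coefficient-concentration}.

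The main obstacle is the concentration of the product and square terms, which are sub-exponential rather than sub-Gaussian; this is what forces the Bernstein-type bound and the burn-in condition on \(t\). The adaptivity of the queries \(x_s\) only affects the linear terms, and there it is handled by predictability together with the uniform bounds \(|f|\le B\) and \(|\delta_N|\le e_N\).
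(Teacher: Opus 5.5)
Your proposal is correct and follows the paper's proof essentially step for step. It uses the same ratio identity $(\Delta_{1,t}-\lambda_{N,t}\Delta_{2,t})/D_t$, the same martingale-difference expansions, the deterministic bound $|\lambda_{N,t}|\le(|\sigma_{YZ}|+Be_N)/\sigma_Z^2$, the burn-in that gives $D_t\ge\sigma_Z^2/2$, and a Bernstein-type bound with a union bound over $t\le T$ and the two moments. The only cosmetic difference is that you split each increment into a conditionally sub-Gaussian linear part and an i.i.d.\ sub-exponential quadratic part, whereas the paper applies a single conditional MGF bound to the whole increment.
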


The proof uses martingale concentration for the two empirical moments and is
given in the appendix.  Assumption~\ref{assumption:kernel-objective} implies
\(|f(x)|\leq B\) for every \(x\in\X\), which allows us to compare
\(\lambda_{N,t}\) with the oracle coefficient.

\begin{corollary}[Oracle comparison]
\label{corollary:online-coefficient-oracle-gap}
Under the conditions of
Theorem~\ref{theorem:online-coefficient-concentration}, define
\begin{equation}
\label{eq:def-offline-coefficient-error}
    q_N
    :=
    \frac{Be_N}{\sigma_Z^2}
    +
    \frac{|\sigma_{YZ}|e_N^2}{\sigma_Z^4}.
\end{equation}
On the same event, for every \(t\) in the range of
\eqref{eq:online-coefficient-concentration},
\begin{equation}
\label{eq:online-coefficient-oracle-gap}
\begin{aligned}
    \left|
        \widehat\lambda_t-\lambda^\star
    \right|
    &\leq
    a_t+q_N,\\
    0
    \leq
    \eta_{\widehat\lambda_t}^2-\eta_\star^2
    &=
    \sigma_Z^2
    \left(
        \widehat\lambda_t-\lambda^\star
    \right)^2
    \leq
    \sigma_Z^2(a_t+q_N)^2.
\end{aligned}
\end{equation}
\end{corollary}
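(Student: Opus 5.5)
The plan is to split the gap by the triangle inequality,
\[
    |\widehat\lambda_t-\lambda^\star|
    \leq
    |\widehat\lambda_t-\lambda_{N,t}|
    +
    |\lambda_{N,t}-\lambda^\star|,
\]
and to bound the first term by \(a_t\) using Theorem~\ref{theorem:online-coefficient-concentration}, on the same event and for the same range of \(t\). The rest of the first claim is then a deterministic bound \(|\lambda_{N,t}-\lambda^\star|\leq q_N\). This bound uses only \(\|\delta_N\|_\infty\leq e_N\) and \(|f|\leq B\). The latter follows from Assumption~\ref{assumption:kernel-objective} through the reproducing property: \(|f(x)|=|\langle f,k(x,\cdot)\rangle|\leq B\sqrt{k(x,x)}\leq B\).

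For the deterministic bound, I would abbreviate \(A:=\frac1t\sum_s f(x_s)\delta_N(x_s)\) and \(D:=\frac1t\sum_s\delta_N^2(x_s)\). These satisfy \(|A|\leq Be_N\) and \(0\leq D\leq e_N^2\). Writing both coefficients over the common denominator gives
\[
    \lambda_{N,t}-\lambda^\star
    =
    \frac{\sigma_{YZ}-A}{\sigma_Z^2+D}-\frac{\sigma_{YZ}}{\sigma_Z^2}
    =
    -\frac{A}{\sigma_Z^2+D}
    -
    \frac{\sigma_{YZ}D}{\sigma_Z^2(\sigma_Z^2+D)}.
\]
Since \(D\geq0\), each denominator \(\sigma_Z^2+D\) can be replaced by \(\sigma_Z^2\). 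This yields \(|\lambda_{N,t}-\lambda^\star|\leq Be_N/\sigma_Z^2+|\sigma_{YZ}|e_N^2/\sigma_Z^4=q_N\), which is exactly the first line of \eqref{eq:online-coefficient-oracle-gap}.

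For the variance identity, I would complete the square in \eqref{eq:def-corrected-noise-variance}. Using \(\sigma_{YZ}=\lambda^\star\sigma_Z^2\),
\[
    \eta_\lambda^2
    =
    \sigma_Y^2-\frac{\sigma_{YZ}^2}{\sigma_Z^2}+\sigma_Z^2(\lambda-\lambda^\star)^2
    =
    \eta_\star^2+\sigma_Z^2(\lambda-\lambda^\star)^2 .
\]
This holds for every real \(\lambda\), so we may evaluate it at \(\lambda=\widehat\lambda_t\). Here \(\eta_{\widehat\lambda_t}^2\) is read as the deterministic function \(\eta_\lambda^2\) evaluated at the realized \(\widehat\lambda_t\). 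The identity immediately gives nonnegativity. Squaring the first-line bound, which is legitimate because both sides are nonnegative, gives \(\sigma_Z^2(a_t+q_N)^2\).

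No step is a genuine obstacle; the only point needing care is bookkeeping. All statements must hold simultaneously on the single event of Theorem~\ref{theorem:online-coefficient-concentration}, and the bound on \(\lambda_{N,t}\) must be uniform in \(t\) and in the adaptive query path. Both follow because the \(q_N\) bound is deterministic given the offline dataset.
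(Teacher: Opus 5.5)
Your proposal is correct and follows essentially the same route as the paper: the triangle inequality through \(\lambda_{N,t}\) with Theorem~\ref{theorem:online-coefficient-concentration} for the sampling term, a deterministic common-denominator bound \(|\lambda_{N,t}-\lambda^\star|\leq q_N\) from \(|f|\leq B\) and \(\|\delta_N\|_\infty\leq e_N\), and completing the square in \(\eta_\lambda^2\) evaluated at \(\lambda=\widehat\lambda_t\). The only cosmetic difference is that you split the paper's single fraction for \(\lambda_{N,t}-\lambda^\star\) into two terms before bounding each denominator below by \(\sigma_Z^2\), which gives the same estimate.
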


The variance gap in \eqref{eq:online-coefficient-oracle-gap} is \(O(\log(T/\alpha)/t)\) with exact centering \((\widehat g_N=g)\), for which \(q_N=0\). With estimated centering, \(q_N=O(e_N)\), and the upper bound approaches \(\sigma_Z^2q_N^2=O(e_N^2)\). The coefficient error analysis also extends to reward shifts and positive rescaling, as shown in Lemma~\ref{lemma:response-transformation} in Appendix~\ref{appendix:coefficient-concentration}.

Offline estimation error also shifts the conditional mean:
\begin{equation}
\label{eq:corrected-conditional-mean}
    \E\!\left[
        \widetilde Y_{t+1}(\widehat\lambda_t)
        \mid\F_t
    \right]
    =
    f(x_{t+1})
    +
    \widehat\lambda_t\delta_N(x_{t+1}).
\end{equation}
The second term is \(\F_t\)-measurable, is of order \(e_N\), and need not vanish as \(t\) grows. The next subsection incorporates this mean shift into the GP confidence bound.

\subsection{Confidence Bounds}
\label{subsec:confidence-bounds}

To obtain a direct confidence bound, we analyze a stabilized version of AI-GPB. Let \(m<T\) be a round covered by Theorem~\ref{theorem:online-coefficient-concentration}. At the end of round \(m\), we fix the coefficient at \(\widehat\lambda_m\) and use it to correct the full observation history. The full procedure is given in the appendix.

For each \(t>m\), we form the corrected history by setting
\begin{equation}
\label{eq:def-stabilized-observation}
\begin{aligned}
    \widetilde Y_s^{(m)}
    &:=
    Y_s-\widehat\lambda_m\widetilde Z_s\\
    &=
    h_m(x_s)+\widetilde\varepsilon_s^{(m)},
    \qquad
    s=1,\ldots,t-1.
\end{aligned}
\end{equation}
where
\begin{equation}
\label{eq:def-stabilized-target-noise}
    h_m(x)
    :=
    f(x)+\widehat\lambda_m\delta_N(x),
    \qquad
    \widetilde\varepsilon_s^{(m)}
    :=
    \varepsilon_s^Y-\widehat\lambda_m\varepsilon_s^Z.
\end{equation}
Conditional on \(\F_m\), the function \(h_m\) is fixed.  For every \(s>m\), the corrected noise satisfies
\begin{equation}
\label{eq:stabilized-noise-moments}
    \E\!\left[
        \widetilde\varepsilon_s^{(m)}
        \mid\F_{s-1}
    \right]
    =
    0,
    \qquad
    \Var\!\left(
        \widetilde\varepsilon_s^{(m)}
        \mid\F_{s-1}
    \right)
    =
    \eta_{\widehat\lambda_m}^2.
\end{equation}
The first \(m\) observations are used to estimate \(\widehat\lambda_m\). We account for this dependence through a confidence bound uniform over the coefficient range. The proof in Appendix~\ref{appendix:confidence-corrected-history} controls the oracle residual and auxiliary noise separately.

On the event in Corollary~\ref{corollary:online-coefficient-oracle-gap},
\begin{equation}
\label{eq:stabilized-noise-envelope}
\begin{aligned}
    \eta_{\widehat\lambda_m}^2
    &=
    \eta_\star^2
    +
    \sigma_Z^2
    \left(
        \widehat\lambda_m-\lambda^\star
    \right)^2\\
    &\leq
    \overline\eta_m^2
    :=
    \eta_\star^2
    +
    \sigma_Z^2(a_m+q_N)^2.
\end{aligned}
\end{equation}
The confidence analysis uses \(\overline\eta_m^2\) as a fixed GP noise parameter. In implementation, Algorithm~\ref{alg:ppi-gp-ucb} instead estimates the GP noise variance from the corrected observations.

To apply the GP confidence result to \(h_m\), we additionally assume that, on the offline event in Assumption~\ref{assumption:offline-estimation},
\begin{equation}
\label{eq:offline-rkhs-error-bound}
    \delta_N\in\mathcal H_k,
    \qquad
    \|\delta_N\|_{\mathcal H_k}\leq D,
\end{equation}
for some constant \(D\geq 0\) that does not depend on \(N\) or \(T\). This RKHS condition is separate from the uniform error bound in \eqref{eq:offline-uniform-error}.  Together with Assumption~\ref{assumption:kernel-objective}, it gives
\begin{equation}
\label{eq:stabilized-target-rkhs-bound}
    \|h_m\|_{\mathcal H_k}
    \leq
    B+|\widehat\lambda_m|D.
\end{equation}

For \(n\geq 1\), define the maximum information gain under noise variance \(\overline\eta_m^2\) by
\begin{equation}
\label{eq:def-maximum-information-gain}
    \gamma_n\!\left(k,\overline\eta_m^2\right)
    :=
    \max_{A\in\X^n}
    \frac{1}{2}
    \log\det\!\left(
        \mathbf I_n
        +
        \overline\eta_m^{-2}\mathbf K_A
    \right),
\end{equation}
where \(A=(x_1,\ldots,x_n)\) and \(\mathbf K_A=[k(x_i,x_j)]_{i,j=1}^n\).  For \(t>m\), let \(\mu_{t-1}\) and \(s_{t-1}\) denote the GP posterior mean and standard deviation computed from the full corrected history
\(\{(x_s,\widetilde Y_s^{(m)})\}_{s=1}^{t-1}\), using
\(\overline\eta_m^2\) as the noise parameter. Given \(\alpha\in(0,1)\), set
\begin{equation}
\label{eq:def-stabilized-confidence-parameter}
    \sqrt{\beta_t}
    :=
    B+|\widehat\lambda_m|D
    +
    2
    \sqrt{
        \gamma_{t-1}\!\left(k,\overline\eta_m^2\right)
        +
        1
        +
        \log\!\left(\frac{4}{\alpha}\right)
    }.
\end{equation}

\begin{theorem}[Confidence bound]
\label{theorem:confidence-online-correction}
Suppose Assumptions~\ref{assumption:kernel-objective} and \ref{assumption:paired-fluctuations} hold.  Condition on an offline dataset satisfying \eqref{eq:offline-uniform-error} and \eqref{eq:offline-rkhs-error-bound}. Then, with probability at least \(1-\alpha\) over the online observations,
\begin{equation}
\label{eq:confidence-stabilized-target}
    \left|
        h_m(x)-\mu_{t-1}(x)
    \right|
    \leq
    \sqrt{\beta_t}\,s_{t-1}(x)
\end{equation}
holds simultaneously for every \(t=m+1,\ldots,T\) and \(x\in\X\).
\end{theorem}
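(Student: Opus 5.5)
The plan is to split the posterior error into a deterministic RKHS part and a noise part. The noise part is then written as a fixed oracle martingale plus a scalar multiple of a second martingale, so the data-dependent coefficient \(\widehat\lambda_m\) enters only as a scalar. Write \(\mathbf V_{t-1}:=\mathbf K_{t-1}+\overline\eta_m^2\mathbf I_{t-1}\). Let \(\mathbf h_{t-1}\) and \(\widetilde{\boldsymbol\varepsilon}_{t-1}\) be the vectors of \(h_m(x_s)\) and \(\widetilde\varepsilon_s^{(m)}\) for \(s\le t-1\). Then
\[
h_m(x)-\mu_{t-1}(x)
=\Bigl(h_m(x)-\mathbf k_{t-1}(x)^\top\mathbf V_{t-1}^{-1}\mathbf h_{t-1}\Bigr)
-\mathbf k_{t-1}(x)^\top\mathbf V_{t-1}^{-1}\widetilde{\boldsymbol\varepsilon}_{t-1}.
\]
For the first term I use the standard kernel ridge regression identity. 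For every \(h\in\mathcal H_k\), every design and every \(x\), it gives \(|h(x)-\mathbf k_{t-1}(x)^\top\mathbf V_{t-1}^{-1}\mathbf h_{t-1}|\le\|h\|_{\mathcal H_k}s_{t-1}(x)\). This is purely deterministic, so it applies to \(h_m\) even though \(h_m\) depends on the first \(m\) observations through \(\widehat\lambda_m\). By \eqref{eq:stabilized-target-rkhs-bound}, the first term is at most \((B+|\widehat\lambda_m|D)s_{t-1}(x)\), which accounts for the first part of \(\sqrt{\beta_t}\).

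For the noise term, following the hint, I decompose
\[
\widetilde\varepsilon_s^{(m)}=e_s-c_m\varepsilon_s^Z,\qquad e_s:=\varepsilon_s^Y-\lambda^\star\varepsilon_s^Z,\qquad c_m:=\widehat\lambda_m-\lambda^\star .
\]
Under Assumption~\ref{assumption:paired-fluctuations}, conditional on \(\F_{s-1}\), both \(e_s\) and \(\varepsilon_s^Z\) are centered Gaussians for every \(s\ge1\), including \(s\le m\). They have variances \(\eta_\star^2\) and \(\sigma_Z^2\), so each sequence is a martingale difference sequence, conditionally \(\eta_\star\)- and \(\sigma_Z\)-sub-Gaussian respectively. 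Neither sequence involves \(\widehat\lambda_m\); the dependence on the first \(m\) rounds sits entirely in the scalar \(c_m\).

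I then apply the self-normalized concentration inequality of \citet{chowdhury2017kernelized}, rescaled to regularization \(\overline\eta_m^2\), separately to each sequence, each with failure probability \(\alpha/4\), uniformly over \(t\le T\). Combined with the Cauchy--Schwarz step \(|\mathbf k_{t-1}(x)^\top\mathbf V_{t-1}^{-1}\mathbf u|\le \overline\eta_m^{-1}s_{t-1}(x)\,\|\mathbf u\|_{(\overline\eta_m^{-2}\mathbf K_{t-1}+\mathbf I)^{-1}\overline\eta_m^{-2}\mathbf K_{t-1}}\) and \(\tfrac12\log\det(\mathbf I+\overline\eta_m^{-2}\mathbf K_{t-1})\le\gamma_{t-1}(k,\overline\eta_m^2)\), this gives, for \(R\in\{\eta_\star,\sigma_Z\}\),
\[
|\mathbf k_{t-1}(x)^\top\mathbf V_{t-1}^{-1}\mathbf u|\le \frac{R}{\overline\eta_m}\sqrt{2\bigl(\gamma_{t-1}+1+\log(4/\alpha)\bigr)}\,s_{t-1}(x).
\]
By linearity and the triangle inequality, the noise term is at most \((\eta_\star+|c_m|\sigma_Z)\,\overline\eta_m^{-1}\sqrt{2(\cdots)}\,s_{t-1}(x)\). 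Because the bound is linear in \(c_m\) and holds for the fixed sequences, it remains valid for the random \(c_m\); this is the step that handles the dependence on the first \(m\) observations.

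On the event of Corollary~\ref{corollary:online-coefficient-oracle-gap}, which I allocate the remaining \(\alpha/2\) of probability (taking that corollary at level \(\alpha/2\) inside \(a_m\)), we have \(|c_m|\le a_m+q_N\). The elementary inequality \(a+b\le\sqrt{2(a^2+b^2)}\) then gives \(\eta_\star+|c_m|\sigma_Z\le\sqrt2\,\overline\eta_m\) by \eqref{eq:stabilized-noise-envelope}. The noise term is therefore at most \(2\sqrt{\gamma_{t-1}+1+\log(4/\alpha)}\,s_{t-1}(x)\), and adding the RKHS term yields \eqref{eq:confidence-stabilized-target} on an event of probability at least \(1-\alpha\).

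The main obstacle I expect is the bookkeeping in the self-normalized bound. I need the exact normalization under which sub-Gaussian parameter \(R\), regularization \(\overline\eta_m^2\), and \(\gamma_{t-1}(k,\overline\eta_m^2)\) combine into precisely \(\sqrt{2(\gamma+1+\log(\cdot))}\). I also need the \(\alpha\) split to produce exactly \(\log(4/\alpha)\); I would first try the \(\alpha/4+\alpha/4+\alpha/2\) allocation described above. A secondary point is that the rounds \(s\le m\) are still martingale differences for \(e_s\) and \(\varepsilon_s^Z\), even though they are not for \(\widetilde\varepsilon_s^{(m)}\); the decomposition is exactly what makes this work.
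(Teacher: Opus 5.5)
Your proposal is correct and follows essentially the same route as the paper. The paper likewise splits the corrected noise into the oracle residual $\varepsilon_s^Y-\lambda^\star\varepsilon_s^Z$ and the term $-(\lambda-\lambda^\star)\varepsilon_s^Z$, and applies the Chowdhury--Gopalan self-normalized bound to each sequence with the rescaled feature map $\phi/\overline\eta_m$ at level $\alpha/4$. It then uses $\eta_\star+d_m\sigma_Z\le\sqrt2\,\overline\eta_m$ to make the noise bound hold uniformly over $\lambda\in\mathcal I_m$, and intersects with the event $\widehat\lambda_m\in\mathcal I_m$ of probability $1-\alpha/2$, so your $\alpha/4+\alpha/4+\alpha/2$ split is exactly the right bookkeeping. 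The normalization you were worried about also works out: each self-normalized norm is at most $R\sqrt{2(\gamma_{t-1}+1+\log(4/\alpha))}$, and the $+1$ is only slack from the small regularization inflation in the Chowdhury--Gopalan statement.
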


The coefficient error enters \eqref{eq:stabilized-noise-envelope} quadratically, while the offline estimation error shifts the target function by
\begin{equation}
\label{eq:stabilized-target-uniform-gap}
    \|h_m-f\|_\infty
    \leq
    |\widehat\lambda_m|e_N.
\end{equation}
The next subsection transfers the confidence bound for \(h_m\) to the original objective \(f\) and derives the resulting cumulative regret bound.

\subsection{Cumulative Regret}
\label{subsec:cumulative-regret}

We now transfer the confidence bound for \(h_m\) to regret for the original objective \(f\).  For \(t>m\), the stabilized procedure uses the GP-UCB acquisition rule with the posterior and confidence parameter defined in the previous subsection. On the event in Theorem~\ref{theorem:confidence-online-correction},
\begin{equation}
\label{eq:pointwise-regret-transfer}
\begin{aligned}
    r_t
    &=
    f(x^\star)-f(x_t)\\
    &\leq
    h_m(x^\star)-h_m(x_t)
    +
    2|\widehat\lambda_m|e_N\\
    &\leq
    2\sqrt{\beta_t}\,s_{t-1}(x_t)
    +
    2|\widehat\lambda_m|e_N.
\end{aligned}
\end{equation}
The first inequality follows from \eqref{eq:stabilized-target-uniform-gap}.  The second follows from the confidence bound and the GP-UCB selection rule. To sum \eqref{eq:pointwise-regret-transfer}, we first control the posterior standard deviations at the queried points.

\begin{lemma}[Cumulative posterior width]
\label{lemma:cumulative-posterior-width}
For the posterior standard deviations in Subsection~\ref{subsec:confidence-bounds}, there exists a constant \(C>0\) such that
\begin{equation}
\label{eq:cumulative-posterior-width}
    \sum_{t=m+1}^T s_{t-1}(x_t)
    \leq
    C
    \left\{
        \overline\eta_m
        \sqrt{
            (T-m)\gamma_T\!\left(k,\overline\eta_m^2\right)
        }
        +
        \gamma_T\!\left(k,\overline\eta_m^2\right)
    \right\}.
\end{equation}
\end{lemma}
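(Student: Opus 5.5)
The plan is the standard elliptical-potential argument, with one wrinkle: the posterior at rounds \(t>m\) conditions on the full history \(\{x_s\}_{s=1}^{t-1}\), including the first \(m\) points. Let \(\sigma_{t-1}\) denote the posterior standard deviation computed from \(x_{m+1},\ldots,x_{t-1}\) only, with the same noise parameter \(\overline\eta_m^2\). Conditioning on more points never increases the posterior variance, so \(s_{t-1}(x)\leq \sigma_{t-1}(x)\) for every \(x\). It therefore suffices to bound \(\sum_{t=m+1}^T\sigma_{t-1}(x_t)\). This is the width sum of a GP run on the \(T-m\) points \(x_{m+1},\ldots,x_T\), and the bound holds deterministically for any query sequence.

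The main step handles the regime where \(\overline\eta_m\) may be small relative to \(k(x,x)\leq1\). Here the usual inequality \(\sigma^2\leq C\log(1+\overline\eta_m^{-2}\sigma^2)\) fails unless \(\overline\eta_m^{-2}\sigma^2\) is bounded. I would split the rounds \(t\in\{m+1,\ldots,T\}\) into two sets:
\[
G:=\{t:\sigma_{t-1}(x_t)\leq\overline\eta_m\},\qquad L:=\{t:\sigma_{t-1}(x_t)>\overline\eta_m\}.
\]

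On \(G\), we have \(u:=\overline\eta_m^{-2}\sigma_{t-1}^2(x_t)\leq1\), and \(u\leq \log(1+u)/\log 2\) on \([0,1]\). Cauchy--Schwarz then gives
\[
\sum_{t\in G}\sigma_{t-1}(x_t)\leq\sqrt{(T-m)\,\overline\eta_m^2\sum_t \tfrac{1}{\log 2}\log\!\left(1+\overline\eta_m^{-2}\sigma_{t-1}^2(x_t)\right)}.
\]
The chain rule for the log-determinant gives \(\tfrac12\sum_t\log(1+\overline\eta_m^{-2}\sigma_{t-1}^2(x_t))=\tfrac12\log\det(\mathbf I+\overline\eta_m^{-2}\mathbf K_A)\leq\gamma_{T-m}\leq\gamma_T\), using the information gain defined in \eqref{eq:def-maximum-information-gain}. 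This produces the term \(C\,\overline\eta_m\sqrt{(T-m)\gamma_T(k,\overline\eta_m^2)}\).

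On \(L\), each term satisfies \(\sigma_{t-1}(x_t)\leq1\) and \(u>1\), so \(\log(1+u)\geq\log2\). Hence
\[
|L|\leq\frac{1}{\log 2}\sum_{t}\log\!\left(1+\overline\eta_m^{-2}\sigma^2_{t-1}(x_t)\right)\leq \frac{2\gamma_T}{\log2},
\]
and therefore \(\sum_{t\in L}\sigma_{t-1}(x_t)\leq |L|\leq C\gamma_T\). This is the additive \(\gamma_T\) term.

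The only delicate points are the monotonicity reduction from the full history to the post-\(m\) points and the handling of the large-variance rounds. The split above resolves the latter with a constant \(C=2/\log 2\) (up to \(\sqrt{\cdot}\) factors) that is independent of \(\overline\eta_m\). Summing the bounds over \(G\) and \(L\) yields \eqref{eq:cumulative-posterior-width}.
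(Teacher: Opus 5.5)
Your proposal is correct and follows essentially the paper's argument: the same split of the post-\(m\) rounds according to whether the posterior width is below or above \(\overline\eta_m\), elementary lower bounds on \(\log(1+z)\), Cauchy--Schwarz on the small-width rounds, and the determinant identity to reach \(\gamma_T(k,\overline\eta_m^2)\). The only cosmetic difference is that you first pass to the posterior built from \(x_{m+1},\ldots,x_{t-1}\) via variance monotonicity, whereas the paper applies the determinant identity to the full history \(x_1,\ldots,x_T\) and drops the nonnegative terms for \(t\leq m\).
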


The proof separates the post-\(m\) rounds according to whether \(s_{t-1}(x_t)\) is below or above \(\overline\eta_m\), and then applies the information-gain identity. The explicit constants and proof are given in the appendix.

\begin{theorem}[Cumulative regret]
\label{theorem:cumulative-regret}
Under the conditions of Theorem~\ref{theorem:confidence-online-correction}, there exists a constant \(C>0\) such that, with probability at least \(1-\alpha\) over the online observations,
\begin{equation}
\label{eq:general-cumulative-regret}
\begin{aligned}
    R_T
    \leq{}&
    2Bm\\
    &+
    C\sqrt{\beta_T}
    \left\{
        \overline\eta_m
        \sqrt{
            (T-m)
            \gamma_T\!\left(k,\overline\eta_m^2\right)
        }
        +
        \gamma_T\!\left(k,\overline\eta_m^2\right)
    \right\}\\
    &+
    2(T-m)|\widehat\lambda_m|e_N.
\end{aligned}
\end{equation}
\end{theorem}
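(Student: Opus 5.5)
The plan is to split the regret sum at round \(m\), control the first \(m\) rounds crudely, and handle the remaining rounds with Theorem~\ref{theorem:confidence-online-correction} and Lemma~\ref{lemma:cumulative-posterior-width}:
\[
R_T=\sum_{t=1}^m r_t+\sum_{t=m+1}^T r_t .
\]

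\textbf{First \(m\) rounds.} By Assumption~\ref{assumption:kernel-objective} and the reproducing property, \(|f(x)|\le \|f\|_{\mathcal H_k}\sqrt{k(x,x)}\le B\) for every \(x\). Hence each \(r_t\le 2B\), and \(\sum_{t\le m}r_t\le 2Bm\). This holds deterministically, whatever rule the stabilized procedure uses before round \(m\).

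\textbf{Rounds \(t>m\).} I would work on the event of Theorem~\ref{theorem:confidence-online-correction}, which has probability at least \(1-\alpha\) over the online observations given the offline data. On that event, the pointwise transfer \eqref{eq:pointwise-regret-transfer} gives
\[
r_t\le 2\sqrt{\beta_t}\,s_{t-1}(x_t)+2|\widehat\lambda_m|e_N .
\]
I would justify this in two steps.
\begin{itemize}
\item By \eqref{eq:stabilized-target-uniform-gap}, \(f(x^\star)-f(x_t)\le h_m(x^\star)-h_m(x_t)+2|\widehat\lambda_m|e_N\).
\item By the confidence bound and the UCB rule,
\[
h_m(x^\star)\le \mu_{t-1}(x^\star)+\sqrt{\beta_t}s_{t-1}(x^\star)\le \mu_{t-1}(x_t)+\sqrt{\beta_t}s_{t-1}(x_t),
\]
and \(h_m(x_t)\ge \mu_{t-1}(x_t)-\sqrt{\beta_t}s_{t-1}(x_t)\).
\end{itemize}

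Summing over \(t=m+1,\dots,T\), the bias term contributes exactly \(2(T-m)|\widehat\lambda_m|e_N\). For the width term I need \(\beta_t\le\beta_T\). This follows from \eqref{eq:def-stabilized-confidence-parameter}, because \(\widehat\lambda_m\) is fixed after round \(m\) and \(\gamma_{n}(k,\overline\eta_m^2)\) is nondecreasing in \(n\): a maximizing set of size \(n\) extended by one point does not decrease \(\log\det\), by the chain rule for the determinant. Pulling out \(\sqrt{\beta_T}\), Lemma~\ref{lemma:cumulative-posterior-width} bounds \(2\sqrt{\beta_T}\sum_{t>m}s_{t-1}(x_t)\) by the middle term of \eqref{eq:general-cumulative-regret}, with the factor \(2\) absorbed into \(C\).

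\textbf{Obstacles.} The main step to check is that the event used is a single event. The confidence bound must hold simultaneously for all \(t>m\) and all \(x\), which is exactly what Theorem~\ref{theorem:confidence-online-correction} provides. Lemma~\ref{lemma:cumulative-posterior-width} is deterministic in the query sequence, so no further union bound is needed and the probability stays \(1-\alpha\). Two further points need care.
\begin{itemize}
\item \(x^\star\) must be a valid comparison point for the UCB maximization. This holds because the argmax in \eqref{eq:ppi-gp-ucb-acquisition} is over all of \(\X\).
\item The constant \(C\) must not depend on \(T\) or \(m\). It comes only from the lemma's absolute constants, which I take as given.
\end{itemize}
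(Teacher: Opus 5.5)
Your proposal is correct and follows the paper's proof essentially step for step. You bound the first \(m\) rounds by \(2Bm\) using \(|f|\le B\); for \(t>m\) you combine the confidence bound, the UCB selection rule and the uniform gap \(\|h_m-f\|_\infty\le|\widehat\lambda_m|e_N\); you then pull out \(\sqrt{\beta_T}\) by monotonicity and apply Lemma~\ref{lemma:cumulative-posterior-width}. The differences are cosmetic. The paper first writes the exact identity \(r_t=h_m(x^\star)-h_m(x_t)+\widehat\lambda_m\{\delta_N(x_t)-\delta_N(x^\star)\}\) before bounding the last term by \(2|\widehat\lambda_m|e_N\). You also justify the monotonicity of \(\gamma_n\), and hence of \(\beta_t\), which the paper only asserts.
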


Together with Assumption~\ref{assumption:offline-estimation}, the bound holds jointly over the offline and online observations with probability at least \(1-\alpha_N-\alpha\). The three terms in \eqref{eq:general-cumulative-regret} account for the first \(m\) rounds, GP-UCB learning under the corrected noise level, and the transfer from \(h_m\) to \(f\), respectively. When \(\widehat g_N=g\), the transfer term vanishes and
\[
    \overline\eta_m^2-\eta_\star^2
    =
    O\!\left(
        \frac{\log(4T/\alpha)}{m}
    \right).
\]
Thus, the corrected noise bound approaches the oracle variance as the coefficient estimate improves.

The next subsection specializes \eqref{eq:general-cumulative-regret} to Matérn kernels and relates the required offline accuracy to the offline sample size.

\subsection{Kernel-Specific Rates}
\label{subsec:kernel-specific-rates}

Theorem~\ref{theorem:cumulative-regret} applies to any kernel for which the maximum information gain can be controlled.  We now specialize the result to Matérn kernels, which are also used in our numerical experiments. Let \(\X\subset\mathbb R^d\), and let \(k\) be a stationary Matérn kernel with smoothness \(\nu>d/2\) and fixed hyperparameters. Appendix~\ref{appendix:kernel-specific-rates} establishes
\begin{equation}
\label{eq:matern-information-gain}
    \gamma_T(k,\eta^2)
    =
    \widetilde O\!\left(
        \left(
            \frac{T}{\eta^2}
        \right)^{\frac{d}{2\nu+d}}
    \right).
\end{equation}
Here, \(\widetilde O\) suppresses logarithmic factors, while the dependence on the noise variance is shown explicitly. Combining this bound with \eqref{eq:general-cumulative-regret} and the identity \(\eta_\star^2=\sigma_Y^2(1-\rho^2)\) gives the following correlation-dependent regret rate.

\begin{corollary}[Matérn regret rate]
\label{corollary:matern-regret-rate}
Suppose the conditions of Theorem~\ref{theorem:cumulative-regret} hold, and fix \(\rho\in(-1,1)\) independently of \(T\). Let
\begin{equation}
\label{eq:matern-rate-conditions}
    m
    =
    \left\lceil\sqrt T\right\rceil,
    \qquad
    e_N
    =
    \widetilde O\!\left(
        T^{-\frac{\nu}{2\nu+d}}
    \right).
\end{equation}
Then, with probability at least \(1-\alpha_N-\alpha\),
\begin{equation}
\label{eq:matern-cumulative-regret-rate}
    R_T
    =
    \widetilde O\!\left(
        (1-\rho^2)^{\frac{2\nu-d}{2(2\nu+d)}}
        T^{\frac{2\nu+3d}{2(2\nu+d)}}
        +
        T^{\frac{\nu+d}{2\nu+d}}
    \right).
\end{equation}
\end{corollary}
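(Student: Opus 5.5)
We obtain the corollary by substituting the choices in \eqref{eq:matern-rate-conditions} and the information-gain bound \eqref{eq:matern-information-gain} into the three terms of \eqref{eq:general-cumulative-regret} in Theorem~\ref{theorem:cumulative-regret}. Work on the intersection of the offline event of Assumption~\ref{assumption:offline-estimation} and the online events of Theorem~\ref{theorem:online-coefficient-concentration} and Theorem~\ref{theorem:confidence-online-correction}. Splitting \(\alpha\) between the two online events changes only constants and logarithms, so the probability is \(1-\alpha_N-\alpha\).

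\textbf{Step 1: coefficient and noise envelope.} Since \(m=\lceil\sqrt T\rceil\), we have \(m\geq C\log(T/\alpha)\) for large \(T\), so Theorem~\ref{theorem:online-coefficient-concentration} applies at \(t=m\). It gives \(a_m=\widetilde O(T^{-1/4})\). Because \(e_N\to0\), \(q_N=O(e_N)\), and Corollary~\ref{corollary:online-coefficient-oracle-gap} then gives \(|\widehat\lambda_m|\leq|\lambda^\star|+a_m+q_N=O(1)\). It follows that
\[
    \overline\eta_m^2=\eta_\star^2+\sigma_Z^2(a_m+q_N)^2
    =\sigma_Y^2(1-\rho^2)+\widetilde O\!\left(T^{-1/2}+e_N^2\right).
\]
Since \(\rho\) is fixed, \(\overline\eta_m^2\asymp\sigma_Y^2(1-\rho^2)\) for large \(T\), with constants uniform in \(T\). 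This is the step needing the most care. The perturbation must not dominate, which holds because \(\eta_\star^2>0\) is fixed. We also need \(\overline\eta_m^2\) bounded below, so that the information gain can be evaluated at \(\eta_\star^2\) up to constants.

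\textbf{Step 2: information gain and \(\beta_T\).} By \eqref{eq:matern-information-gain},
\[
    \gamma_T(k,\overline\eta_m^2)=\widetilde O\!\left((1-\rho^2)^{-\frac{d}{2\nu+d}}\,T^{\frac{d}{2\nu+d}}\right).
\]
From \eqref{eq:def-stabilized-confidence-parameter} we get \(\sqrt{\beta_T}=O(1)+O(\sqrt{\gamma_T})\).

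\textbf{Step 3: the GP-UCB learning term.} The leading term is
\[
    \sqrt{\beta_T}\,\overline\eta_m\sqrt{T\gamma_T}
    =\widetilde O\!\left(\overline\eta_m\sqrt T\,\gamma_T\right).
\]
Substituting Steps 1 and 2, it equals
\[
    \widetilde O\!\left((1-\rho^2)^{\frac12-\frac{d}{2\nu+d}}\,T^{\frac12+\frac{d}{2\nu+d}}\right),
\]
and the exponents simplify to \(\frac{2\nu-d}{2(2\nu+d)}\) and \(\frac{2\nu+3d}{2(2\nu+d)}\). The secondary term \(\sqrt{\beta_T}\gamma_T=\widetilde O(\gamma_T^{3/2})\) has \(T\)-exponent \(\frac{3d}{2(2\nu+d)}\). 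This does not exceed \(\frac{2\nu+3d}{2(2\nu+d)}\), so it is absorbed; its \((1-\rho^2)\) power differs, but \(\rho\) is fixed.

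\textbf{Step 4: the remaining terms.} The burn-in term is \(2Bm=O(\sqrt T)\), which is dominated by \(T^{\frac{\nu+d}{2\nu+d}}\) because \(\frac{\nu+d}{2\nu+d}\geq\frac12\). The transfer term is
\[
    2(T-m)|\widehat\lambda_m|e_N=\widetilde O\!\left(T\cdot T^{-\frac{\nu}{2\nu+d}}\right)=\widetilde O\!\left(T^{\frac{\nu+d}{2\nu+d}}\right).
\]
Summing the three contributions gives \eqref{eq:matern-cumulative-regret-rate}.
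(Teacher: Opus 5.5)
Your proposal is correct and follows the paper's proof essentially step for step. It uses the same intersection of offline and online events, the same estimate $a_m+q_N=\widetilde O(T^{-1/4})$ yielding $|\widehat\lambda_m|=O(1)$ and $\overline\eta_m^2\asymp\eta_\star^2$, the same Matérn bound $\gamma_T=\widetilde O((1-\rho^2)^{-p}T^p)$ with $p=d/(2\nu+d)$, and the same treatment of the burn-in and transfer terms. The only cosmetic difference is where you put the $\sqrt{\beta_T}\gamma_T=\widetilde O(T^{3p/2})$ term: you absorb it into the leading term, whereas the paper uses $p<1/2$ to absorb it into the lower-order $T^{(\nu+d)/(2\nu+d)}$ term, so that all $\rho$-dependent constants sit there and the leading term's prefactor stays clean. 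Both are valid for fixed $\rho$, as is the paper's explicit remark that the concentration constant is uniform in $T$ because $e_N\to0$.
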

The conditions in \eqref{eq:matern-rate-conditions} give
\[
    \overline\eta_m^2
    =
    \eta_\star^2
    +
    \widetilde O(T^{-1/2}).
\]
This choice of \(m\) preserves the leading noise-dependent term. Regret from the first \(m\) rounds and offline estimation error enter only the lower-order term in \eqref{eq:matern-cumulative-regret-rate}.

For a Matérn-\(5/2\) kernel in two dimensions,
\eqref{eq:matern-cumulative-regret-rate} becomes
\begin{equation}
\label{eq:matern-five-halves-two-dimensional-rate}
    R_T
    =
    \widetilde O\!\left(
        (1-\rho^2)^{3/14}T^{11/14}
        +
        T^{9/14}
    \right).
\end{equation}
Vanilla GP-UCB uses \(\sigma_Y^2\) in the same Matérn information-gain bound, while the leading term in \eqref{eq:matern-five-halves-two-dimensional-rate} uses \(\eta_\star^2=\sigma_Y^2(1-\rho^2)\).  Consequently, the polynomial noise dependence of the leading upper-bound term is reduced by the factor \((1-\rho^2)^{3/14}\).

Under the Matérn kernel ridge regression conditions stated in Appendix~\ref{appendix:kernel-specific-rates}, the estimator constructed there satisfies
\eqref{eq:offline-rkhs-error-bound} and achieves
\begin{equation}
\label{eq:matern-offline-supnorm-rate}
    \|\widehat g_N-g\|_\infty
    =
    \widetilde O_{\mathbb P}\!\left(
        N^{-\frac{\nu}{2\nu+d}}
    \right).
\end{equation}
At a fixed offline confidence level, \(e_N\) can therefore be chosen at the same rate.  It follows that \(N=\widetilde\Omega(T)\) is sufficient for
\eqref{eq:matern-rate-conditions}.  

Finally, \(S_T\leq R_T/T\) gives
\begin{equation}
\label{eq:matern-simple-regret-rate}
    S_T
    =
    \widetilde O\!\left(
        (1-\rho^2)^{\frac{2\nu-d}{2(2\nu+d)}}
        T^{-\frac{2\nu-d}{2(2\nu+d)}}
        +
        T^{-\frac{\nu}{2\nu+d}}
    \right).
\end{equation}
Together, these bounds show how the correlation, kernel smoothness, and offline estimation accuracy enter the performance guarantee.  The next section examines these effects numerically.

\section{Experiments}
\label{sec:experiments}

A central motivation for paired correction is to use within-pair dependence to reduce observation noise and improve sequential decisions, even when the auxiliary mean is estimated offline. We examine this behavior on continuous benchmark functions with different optimization landscapes.  Unlike independent finite-arm problems, each observation changes GP uncertainty beyond the queried point.  We vary \(\rho\) while holding \(f\), \(g\), and both noise variances fixed. The shape of \(g\) is chosen independently of \(f\), so the auxiliary source is not designed to predict the objective.

To evaluate the gains from auxiliary observations with known and estimated means, we compare Vanilla GP-UCB, a known-\(g\) reference, and AI-GPB. All three use the GP-UCB acquisition rule. The reference uses the true auxiliary mean, whereas AI-GPB constructs \(\widehat g_N\) from offline auxiliary observations. We report both cumulative and simple regret across benchmarks. Appendix~\ref{appendix:experimental-setup} gives the experimental protocol, and Appendix~\ref{appendix:additional-experiments} reports coefficient-learning diagnostics, offline-budget comparisons, and exploration sensitivity.

\subsection{Cumulative Regret}
\label{subsec:experimental-cumulative-regret}
We evaluate AI-GPB across benchmark functions with different optimization landscapes. Figure~\ref{fig:cumulative-regret} shows lower cumulative regret than Vanilla GP-UCB on all four tasks, with larger reductions as \(\rho\) increases. The gap widens with \(T\), indicating that the improvement is not confined to the initial rounds.

\begin{figure}[t]
    \centering
    \includegraphics[width=\textwidth]{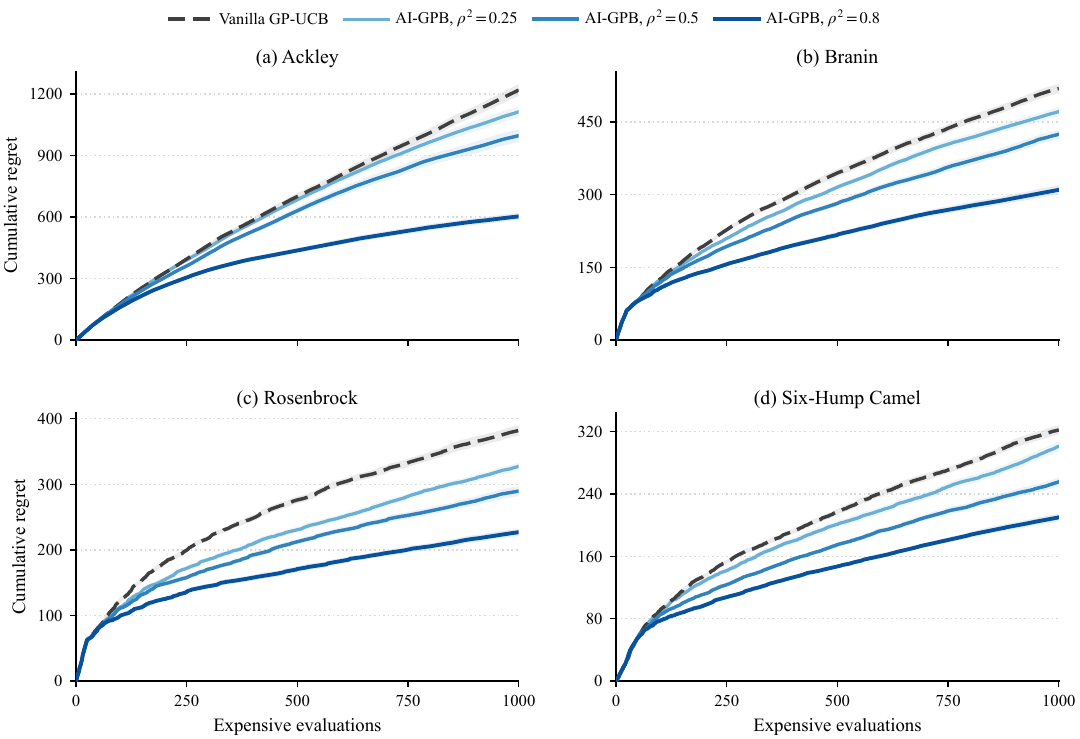}
    \caption{\textbf{Cumulative regret over the online horizon.}
    Curves report means over 20 replications, and shading
    denotes one standard error of the mean. All methods share the same initialization within each replication.}
    \label{fig:cumulative-regret}
\end{figure}

\begin{table}[!htbp]
    \centering
    \caption{\textbf{Cumulative regret at \(T=1000\).}
    Results are averaged over 20 replications. AI-GPB entries also report percentage reductions relative to Vanilla GP-UCB.}
    \label{tab:cumulative-regret-reduction}
    \begin{tabular*}{\textwidth}{
        @{\extracolsep{\fill}}
        lccccc
        @{}
    }
        \toprule
        &
        &
        \multicolumn{4}{c}{Mean cumulative regret \(R_{1000}\)} \\
        \cmidrule(lr){3-6}
        Method
        & \(\rho^2\)
        & Ackley
        & Branin
        & Rosenbrock
        & Six-Hump Camel \\
        \midrule
        Vanilla GP-UCB
        & --
        & \(1219.4\)
        & \(518.9\)
        & \(382.2\)
        & \(322.2\) \\
        \addlinespace[2pt]
        AI-GPB
        & \(0.25\)
        & \(1113.5\;(8.7\%)\)
        & \(471.3\;(9.2\%)\)
        & \(327.4\;(14.3\%)\)
        & \(301.0\;(6.6\%)\) \\
        & \(0.5\)
        & \(996.9\;(18.2\%)\)
        & \(424.5\;(18.2\%)\)
        & \(289.8\;(24.2\%)\)
        & \(255.7\;(20.6\%)\) \\
        & \(0.8\)
        & \(603.0\;(50.6\%)\)
        & \(309.9\;(40.3\%)\)
        & \(227.1\;(40.6\%)\)
        & \(210.1\;(34.8\%)\) \\
        \bottomrule
    \end{tabular*}
\end{table}

At \(T=1000\), AI-GPB reduces cumulative regret relative to Vanilla GP-UCB across all tasks and values of \(\rho\), as reported in Table~\ref{tab:cumulative-regret-reduction}.  At \(\rho^2=0.8\), the reductions range from \(34.8\%\) to \(50.6\%\). Writing \(c=1-\rho^2\), the noise dependence of Matérn regret upper bounds provides two reference values: \(1-c^{3/14}\) from \eqref{eq:matern-five-halves-two-dimensional-rate} and \(1-c^{5/14}\) from sharper Matérn regret bounds \citep{iwazaki2025improved}. Numerically, the observed reductions exceed the former and are generally close to the latter.

\subsection{Simple Regret}
\label{subsec:experimental-simple-regret}

Cumulative regret evaluates the full sampling path, whereas the simple regret in \eqref{eq:def-simple-regret} measures the best point found within a fixed budget of expensive evaluations \citep{wang2016optimization}.  The simple-regret rate in \eqref{eq:matern-simple-regret-rate} follows directly by applying
\(S_T\leq R_T/T\) to the cumulative-regret bound.

AI-GPB uses auxiliary residuals to reduce observation noise, making each expensive evaluation more informative. Through the GP kernel, the resulting gain in posterior precision extends to related locations and can influence subsequent GP-UCB queries.  We examine this effect over \(300\) iterations on six standard BO benchmarks with dimensions ranging from \(2\) to \(10\).

Figure~\ref{fig:simple-regret} reports the simple regret trajectories across benchmarks.  At \(\rho^2=0.8\), the figure also includes a reference using the true \(g\) and the population-optimal coefficient \(\lambda^\star\). Performance generally improves as \(\rho\) increases. After \(300\) adaptive evaluations, AI-GPB with \(\rho^2=0.8\) achieves lower mean log simple regret than Vanilla GP-UCB on every task, corresponding to reductions of \(22.7\%\)--\(72.4\%\) in geometric mean simple regret. 

\begin{figure}[t]
    \centering
    \includegraphics[width=\textwidth]{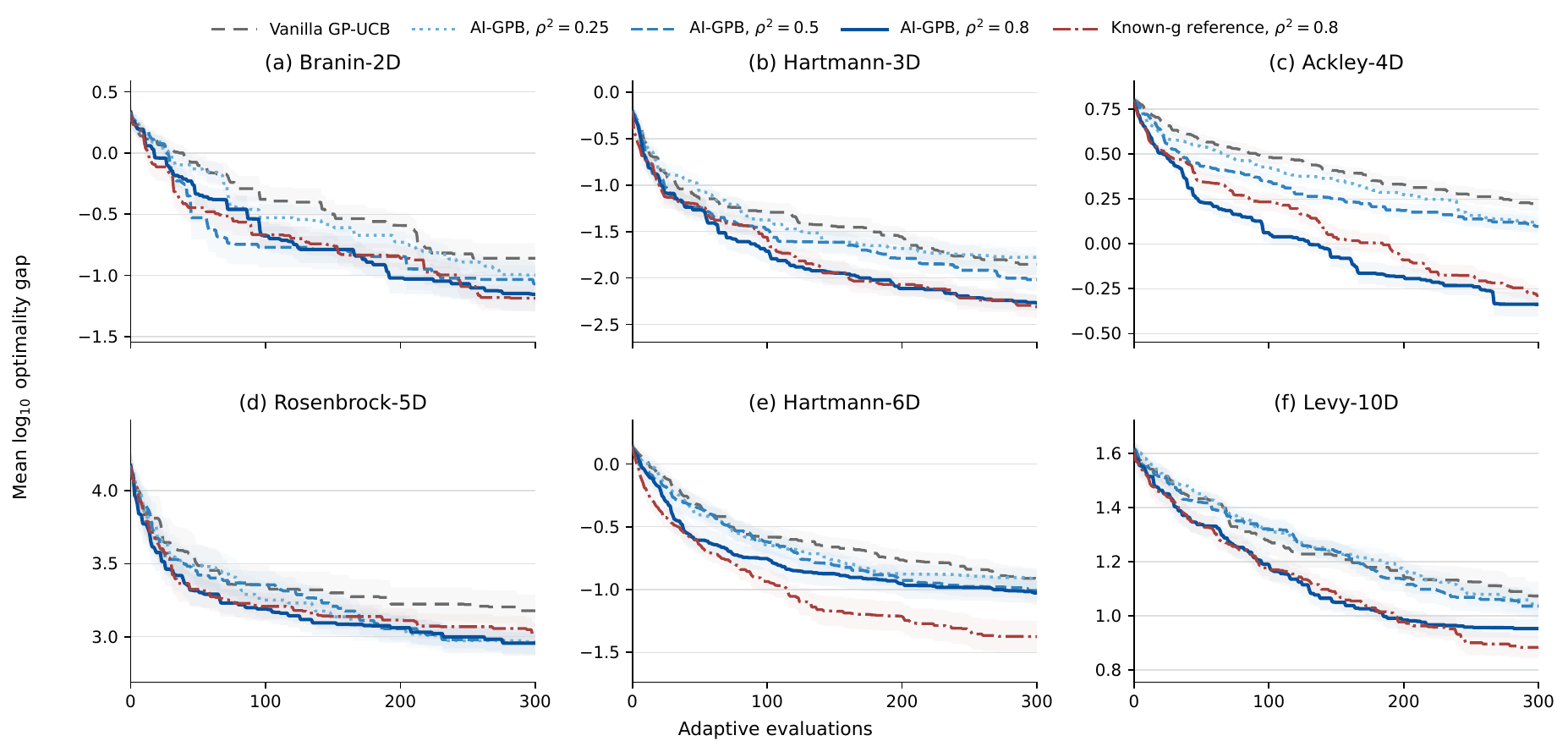}
    \caption{\textbf{Simple regret over the adaptive horizon.}
    Curves report mean \(\log_{10}S_t\) over 20 replications, and shading denotes one standard error of the mean. All methods share the same initialization within each replication.}
    \label{fig:simple-regret}
\end{figure}

At the terminal horizon, the geometric mean simple regret of AI-GPB is within \(17.3\%\) of the reference on five of the six tasks, with a larger gap on Hartmann-6D. Because the methods follow different adaptive query paths, the reference need not dominate on every task. Consistent with Corollary~\ref{corollary:online-coefficient-oracle-gap}, online coefficient estimation can recover much of the variance reduction achieved by \(\lambda^\star\) when \(\widehat g_N\) is accurate.  Together, these results illustrate how reducing observation variance can improve the search path and help AI-GPB find better solutions with fewer expensive evaluations.

\section{Conclusion}
\label{sec:conclusion}
In this work, we formulate a framework for continuous GP bandits with offline auxiliary data and paired online observations. We develop AI-GPB, an algorithm that reduces observation noise through correlation between paired errors without requiring the auxiliary mean to predict the objective. Our analysis establishes kernel-dependent regret guarantees with a stabilized correction coefficient, accounting for offline mean estimation and online coefficient learning. We empirically demonstrate improvements in both cumulative and simple regret across benchmarks with varied landscapes and dimensions.

While we use GP-UCB for query selection, paired correction enters the GP posterior update and can be combined with other acquisition rules. On the modeling side, a useful extension is to allow within-pair dependence to vary across the search space. We view both extensions as promising directions for future studies.

\bibliographystyle{plain}
\bibliography{main}

\appendix

\section{Proofs for Theoretical Analysis}
\label{appendix:theory-proofs}

\subsection{Proof Setup}
\label{appendix:proof-setup}

Throughout this appendix, we condition on the offline event in \eqref{eq:offline-uniform-error}.  The RKHS condition in \eqref{eq:offline-rkhs-error-bound} is used only for the confidence and regret results.

\paragraph{Coefficient range.}
Fix a horizon \(T\) and a confidence level \(\alpha\in(0,1)\).  Choose \(m<T\) in the concentration range of Theorem~\ref{theorem:online-coefficient-concentration} when applied with failure probability \(\alpha/2\), and let \(a_m\) denote the corresponding radius. Define
\begin{equation}
\label{eq:appendix-coefficient-range}
    d_m
    :=
    a_m+q_N,
    \qquad
    \mathcal I_m
    :=
    [\lambda^\star-d_m,\lambda^\star+d_m].
\end{equation}
Corollary~\ref{corollary:online-coefficient-oracle-gap} gives
\(\widehat\lambda_m\in\mathcal I_m\) with probability at least \(1-\alpha/2\). For every \(\lambda\in\mathcal I_m\),
\begin{equation}
\label{eq:appendix-common-noise-envelope}
    \eta_\lambda^2
    =
    \eta_\star^2
    +
    \sigma_Z^2(\lambda-\lambda^\star)^2
    \leq
    \overline\eta_m^2
    :=
    \eta_\star^2+\sigma_Z^2d_m^2.
\end{equation}
The remaining failure probability \(\alpha/2\) is assigned to the GP confidence event.

\paragraph{Fixed correction.}
For a fixed coefficient \(\lambda\), define
\begin{equation}
\label{eq:appendix-fixed-correction-model}
    h_\lambda(x)
    :=
    f(x)+\lambda\delta_N(x),
    \qquad
    \varepsilon_s^\lambda
    :=
    \varepsilon_s^Y-\lambda\varepsilon_s^Z.
\end{equation}
Then
\begin{equation}
\label{eq:appendix-fixed-corrected-observation}
    \widetilde Y_s(\lambda)
    =
    h_\lambda(x_s)+\varepsilon_s^\lambda.
\end{equation}
Setting \(\lambda=\widehat\lambda_m\) recovers
\eqref{eq:def-stabilized-observation}--%
\eqref{eq:def-stabilized-target-noise}.
The first \(m\) queries may be selected by any policy for which
\(x_t\) is \(\F_{t-1}\)-measurable. After round \(m\), the stabilized policy fixes \(\widehat\lambda_m\). For \(t>m\), it forms the GP posterior from the full corrected history with noise parameter \(\overline\eta_m^2\) and selects \(x_t\) using the GP-UCB acquisition rule. The first \(m\) observations therefore remain in the GP update. AI-GPB (Algorithm~\ref{alg:ppi-gp-ucb}) continues to update the coefficient and noise parameter online.

\subsection{Coefficient Concentration}
\label{appendix:coefficient-concentration}

Throughout this subsection, we condition on an offline dataset satisfying
\[
    \|\delta_N\|_\infty\leq e_N.
\]
For \(t\geq1\), define the empirical deviations
\begin{equation}
\label{eq:appendix-centered-empirical-moments}
\begin{aligned}
    \Delta_{1,t}
    &:=
    \frac{1}{t}
    \sum_{s=1}^t
    \left\{
        Y_s\widetilde Z_s
        -
        \E\!\left[
            Y_s\widetilde Z_s
            \mid\F_{s-1}
        \right]
    \right\},\\
    \Delta_{2,t}
    &:=
    \frac{1}{t}
    \sum_{s=1}^t
    \left\{
        \widetilde Z_s^2
        -
        \E\!\left[
            \widetilde Z_s^2
            \mid\F_{s-1}
        \right]
    \right\}.
\end{aligned}
\end{equation}

\begin{lemma}[Online moment concentration]
\label{lemma:online-moment-concentration}
Suppose Assumptions~\ref{assumption:kernel-objective} and
\ref{assumption:paired-fluctuations} hold.  For any \(T\geq1\) and \(\alpha\in(0,1)\), there exists a constant \(C_0>0\), depending only on
\(B\), \(e_N\), and \(\Sigma\), such that, with probability at least \(1-\alpha\),
\begin{equation}
\label{eq:appendix-online-moment-concentration}
\begin{gathered}
    |\Delta_{1,t}|
    \leq b_t,
    \qquad
    |\Delta_{2,t}|
    \leq b_t,\\
    b_t
    :=
    C_0
    \left\{
        \sqrt{
            \frac{\log(4T/\alpha)}{t}
        }
        +
        \frac{\log(4T/\alpha)}{t}
    \right\}
\end{gathered}
\end{equation}
holds simultaneously for every \(t=1,\ldots,T\).
\end{lemma}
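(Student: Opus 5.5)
We plan to decompose each summand of $\Delta_{1,t}$ and $\Delta_{2,t}$ into polynomial functions of the conditionally Gaussian pair $(\varepsilon_s^Y,\varepsilon_s^Z)$ with $\F_{s-1}$-measurable coefficients. Each resulting term is then handled by a Bernstein-type martingale inequality, followed by a union bound over $t=1,\ldots,T$ and over the finitely many terms.

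Using $Y_s=f(x_s)+\varepsilon_s^Y$ and $\widetilde Z_s=\varepsilon_s^Z-\delta_N(x_s)$, we expand
\[
Y_s\widetilde Z_s-\E[Y_s\widetilde Z_s\mid\F_{s-1}]
= f(x_s)\varepsilon_s^Z-\delta_N(x_s)\varepsilon_s^Y+\bigl(\varepsilon_s^Y\varepsilon_s^Z-\sigma_{YZ}\bigr),
\]
and
\[
\widetilde Z_s^2-\E[\widetilde Z_s^2\mid\F_{s-1}]
=\bigl((\varepsilon_s^Z)^2-\sigma_Z^2\bigr)-2\delta_N(x_s)\varepsilon_s^Z.
\]
The coefficients $f(x_s)$ and $\delta_N(x_s)$ are $\F_{s-1}$-measurable and bounded by $B$ and $e_N$, respectively. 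The bound on $f$ holds because $|f(x)|\le \|f\|_{\mathcal H_k}\sqrt{k(x,x)}\le B$ under Assumption~\ref{assumption:kernel-objective}.

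Under Assumption~\ref{assumption:paired-fluctuations}, the linear terms are conditionally Gaussian given $\F_{s-1}$. Their variance proxy is at most $\max(B^2\sigma_Z^2,\,e_N^2\sigma_Y^2,\,4e_N^2\sigma_Z^2)$. For a fixed $t$, the standard supermartingale $\exp(\theta\sum_s a_s\varepsilon_s-\theta^2\sum_s a_s^2\sigma^2/2)$ with Chernoff optimization gives $|\frac1t\sum_s a_s\varepsilon_s|\le c\sqrt{\log(4T/\alpha)/t}$, with failure probability $\alpha/(cT)$.

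The quadratic terms $\varepsilon_s^Y\varepsilon_s^Z-\sigma_{YZ}$ and $(\varepsilon_s^Z)^2-\sigma_Z^2$ are centered sub-exponential variables. Their conditional law given $\F_{s-1}$ is the same at every $s$, because $\Sigma$ is fixed and $\xi_s$ is fresh. We can write $\varepsilon^Y\varepsilon^Z=\tfrac14[(\varepsilon^Y+\varepsilon^Z)^2-(\varepsilon^Y-\varepsilon^Z)^2]$ to reduce to centered chi-square type variables. Their conditional mgf satisfies $\E[e^{\theta X}\mid\F_{s-1}]\le e^{\theta^2\nu^2/2}$ for $|\theta|\le 1/b$, with $\nu,b$ depending only on $\Sigma$. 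The martingale Bernstein inequality (the sub-exponential Freedman form) then yields the deviation bound
\[
\Bigl|\tfrac1t\textstyle\sum_s X_s\Bigr|\le c\Bigl(\sqrt{\log(4T/\alpha)/t}+\log(4T/\alpha)/t\Bigr).
\]
This is the source of the second term in $b_t$.

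The final step is the union bound. There are five scalar martingales (three for $\Delta_1$, two for $\Delta_2$), two tails each, and $T$ values of $t$. We allocate the failure probability so that each gets $\alpha/(cT)$ for a suitable absolute constant $c$. This inflates the logarithm only by a constant factor, which we absorb into $C_0$ (the terms with $\log(4T/\alpha)\ge\log 4$ dominate any additive constants). We then sum the pieces via the triangle inequality. The resulting $C_0$ depends only on $B$, $e_N$, and $\Sigma$.

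The main obstacle is the quadratic term. We have to verify carefully that the conditional sub-exponential mgf bound holds uniformly in $s$ despite adaptive sampling. This goes through because the quadratic terms involve no $x_s$-dependence at all, and because conditional Gaussianity, with its fixed $\Sigma$, holds given $\F_{s-1}$. Once that is checked, the rest is routine bookkeeping.
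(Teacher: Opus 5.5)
Your proposal is correct and follows essentially the same route as the paper: the same expansion of the centered increments into terms with $\F_{s-1}$-measurable coefficients bounded by $B$ and $e_N$, a conditional sub-exponential MGF bound from Gaussianity, the martingale Bernstein inequality with $u=\log(4T/\alpha)$, and a union bound over $t=1,\ldots,T$. The only difference is cosmetic: you split each increment into Gaussian linear pieces and a quadratic piece and union-bound over five scalar martingales, while the paper bounds the conditional MGF of each full increment directly and union-bounds over only the two moments; in both cases the extra constant is absorbed into $C_0$.
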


\begin{proof}
Because \(x_s\) and \(\delta_N(x_s)\) are
\(\F_{s-1}\)-measurable, the summands in
\eqref{eq:appendix-centered-empirical-moments} satisfy
\begin{equation}
\label{eq:appendix-centered-moment-increments}
\begin{aligned}
    Y_s\widetilde Z_s
    -
    \E\!\left[
        Y_s\widetilde Z_s
        \mid\F_{s-1}
    \right]
    &=
    f(x_s)\varepsilon_s^Z
    -
    \delta_N(x_s)\varepsilon_s^Y
    +
    \varepsilon_s^Y\varepsilon_s^Z
    -
    \sigma_{YZ},\\
    \widetilde Z_s^2
    -
    \E\!\left[
        \widetilde Z_s^2
        \mid\F_{s-1}
    \right]
    &=
    (\varepsilon_s^Z)^2
    -
    \sigma_Z^2
    -
    2\delta_N(x_s)\varepsilon_s^Z.
\end{aligned}
\end{equation}
Both sequences are martingale differences.  The kernel normalization gives
\[
    |f(x_s)|
    \leq
    \|f\|_{\mathcal H_k}
    \sqrt{k(x_s,x_s)}
    \leq B,
\]
and the offline event gives \(|\delta_N(x_s)|\leq e_N\).

The paired fluctuations are conditionally Gaussian with the fixed covariance matrix \(\Sigma\). Hence, if \(X_s\) denotes either centered increment in \eqref{eq:appendix-centered-moment-increments}, there exist constants \(c_0,C_1>0\), depending only on \(B\), \(e_N\), and \(\Sigma\), such that
\begin{equation}
\label{eq:appendix-conditional-mgf-bound}
    \E\!\left[
        \exp(\theta X_s)
        \mid\F_{s-1}
    \right]
    \leq
    \exp(C_1\theta^2),
    \qquad
    |\theta|\leq c_0.
\end{equation}
The Bernstein inequality for martingale differences therefore gives, for every \(u>0\),
\[
    \P\!\left(
        \left|
            \frac{1}{t}\sum_{s=1}^t X_s
        \right|
        >
        C_0
        \left\{
            \sqrt{\frac{u}{t}}
            +
            \frac{u}{t}
        \right\}
    \right)
    \leq
    2e^{-u},
\]
after increasing \(C_0\) if needed. Taking \(u=\log(4T/\alpha)\) and applying a union bound over the two moments and \(t=1,\ldots,T\) proves the result.
\end{proof}

\begin{proof}[Proof of
Theorem~\ref{theorem:online-coefficient-concentration}]
By \eqref{eq:coefficient-conditional-moments},
\begin{equation}
\label{eq:appendix-empirical-moment-representations}
\begin{aligned}
    \frac{1}{t}\sum_{s=1}^t
    Y_s\widetilde Z_s
    &=
    \sigma_{YZ}
    -
    \frac{1}{t}\sum_{s=1}^t
    f(x_s)\delta_N(x_s)
    +
    \Delta_{1,t},\\
    \frac{1}{t}\sum_{s=1}^t
    \widetilde Z_s^2
    &=
    \sigma_Z^2
    +
    \frac{1}{t}\sum_{s=1}^t
    \delta_N^2(x_s)
    +
    \Delta_{2,t}.
\end{aligned}
\end{equation}
There exists a constant \(C_2\geq1\) such that, whenever
\[
    t
    \geq
    C_2\log(4T/\alpha),
\]
the event in Lemma~\ref{lemma:online-moment-concentration} implies
\begin{equation}
\label{eq:appendix-empirical-denominator-lower-bound}
    \frac{1}{t}\sum_{s=1}^t
    \widetilde Z_s^2
    \geq
    \sigma_Z^2-b_t
    \geq
    \frac{\sigma_Z^2}{2}.
\end{equation}
Moreover,
\begin{equation}
\label{eq:appendix-population-coefficient-bound}
    |\lambda_{N,t}|
    \leq
    \frac{
        |\sigma_{YZ}|+Be_N
    }{
        \sigma_Z^2
    }.
\end{equation}

Using \eqref{eq:def-online-coefficient},
\eqref{eq:def-population-coefficient}, and
\eqref{eq:appendix-empirical-moment-representations}, we obtain
\begin{equation}
\label{eq:appendix-ratio-identity}
    \widehat\lambda_t-\lambda_{N,t}
    =
    \frac{
        \Delta_{1,t}-\lambda_{N,t}\Delta_{2,t}
    }{
        t^{-1}\sum_{s=1}^t\widetilde Z_s^2
    }.
\end{equation}
Combining
\eqref{eq:appendix-online-moment-concentration}--%
\eqref{eq:appendix-population-coefficient-bound} gives
\[
    |\widehat\lambda_t-\lambda_{N,t}|
    \leq
    \frac{2b_t}{\sigma_Z^2}
    \left\{
        1+
        \frac{|\sigma_{YZ}|+Be_N}{\sigma_Z^2}
    \right\}.
\]
For \(t\geq C_2\log(4T/\alpha)\), the second term in \(b_t\) is bounded by its square-root term.  Since \(T\geq2\) and \(\alpha\in(0,1)\),
\[
    \log(4T/\alpha)
    \leq
    3\log(T/\alpha).
\]
Increasing \(C\) if needed therefore gives
\[
    |\widehat\lambda_t-\lambda_{N,t}|
    \leq
    C
    \sqrt{
        \frac{\log(T/\alpha)}{t}
    },
    \qquad
    C\log(T/\alpha)\leq t\leq T.
\]
This proves the theorem.
\end{proof}

\begin{proof}[Proof of
Corollary~\ref{corollary:online-coefficient-oracle-gap}] The definitions of \(\lambda_{N,t}\) and \(\lambda^\star=\sigma_{YZ}/\sigma_Z^2\) give
\begin{equation}
\label{eq:appendix-population-oracle-gap}
    \lambda_{N,t}-\lambda^\star
    =
    -
    \frac{
        \sigma_Z^2
        t^{-1}\sum_{s=1}^t
        f(x_s)\delta_N(x_s)
        +
        \sigma_{YZ}
        t^{-1}\sum_{s=1}^t
        \delta_N^2(x_s)
    }{
        \sigma_Z^2
        \left\{
            \sigma_Z^2
            +
            t^{-1}\sum_{s=1}^t
            \delta_N^2(x_s)
        \right\}
    }.
\end{equation}
Since \(|f(x_s)|\leq B\) and
\(|\delta_N(x_s)|\leq e_N\),
\[
    |\lambda_{N,t}-\lambda^\star|
    \leq
    \frac{Be_N}{\sigma_Z^2}
    +
    \frac{|\sigma_{YZ}|e_N^2}{\sigma_Z^4}
    =
    q_N.
\]
The triangle inequality and
Theorem~\ref{theorem:online-coefficient-concentration} give
\[
    |\widehat\lambda_t-\lambda^\star|
    \leq
    a_t+q_N.
\]

Finally, completing the square in
\(\eta_\lambda^2\) gives
\[
    \eta_\lambda^2
    =
    \eta_\star^2
    +
    \sigma_Z^2(\lambda-\lambda^\star)^2.
\]
Setting \(\lambda=\widehat\lambda_t\) yields
\[
    0
    \leq
    \eta_{\widehat\lambda_t}^2-\eta_\star^2
    \leq
    \sigma_Z^2(a_t+q_N)^2.
\]
\end{proof}

\paragraph{Reward transformations.}
We next examine how shifting and rescaling the observed rewards affect coefficient estimation. For fixed constants \(a>0\) and \(c\in\mathbb R\), set \(Y_s^{[a,c]}:=aY_s+c\) on the same observation history, leaving the auxiliary observations and \(\widehat g_N\) unchanged. By \eqref{eq:def-online-coefficient} and \eqref{eq:def-population-coefficient}, the transformed empirical coefficient and its population counterpart satisfy
\begin{equation}
\label{eq:appendix-response-transformation-identities}
\begin{aligned}
    \widehat\lambda_t^{(a,c)}
    &=
    a\widehat\lambda_t
    +
    c\frac{
        \sum_{s=1}^t\widetilde Z_s
    }{
        \sum_{s=1}^t\widetilde Z_s^2
    },\\
    \lambda_{N,t}^{[a,c]}
    &=
    a\lambda_{N,t}
    -
    c\frac{
        t^{-1}\sum_{s=1}^t\delta_N(x_s)
    }{
        \sigma_Z^2+t^{-1}\sum_{s=1}^t\delta_N^2(x_s)
    }.
\end{aligned}
\end{equation}
The corresponding oracle coefficient becomes \(a\lambda^\star\).

\begin{lemma}[Transformed coefficient error]
\label{lemma:response-transformation}
For any \(T\geq2\) and \(\alpha\in(0,1)\), there exists a constant \(C>0\), depending only on \(B\), \(e_N\), and \(\Sigma\), such that, with probability at least \(1-\alpha\), the bound
\begin{equation}
\label{eq:appendix-response-transformation-bound}
    \left|
        \widehat\lambda_t^{(a,c)}-a\lambda^\star
    \right|
    \leq
    (a+|c|)C\sqrt{\frac{\log(T/\alpha)}{t}}
    +a q_N
    +\frac{|c|e_N}{\sigma_Z^2}
\end{equation}
holds simultaneously for \(C\log(T/\alpha)\leq t\leq T\), where \(q_N\) is defined in \eqref{eq:def-offline-coefficient-error}.
\end{lemma}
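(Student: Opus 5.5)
The plan is to decompose the error with the identities in \eqref{eq:appendix-response-transformation-identities} and the triangle inequality:
\[
    \bigl|\widehat\lambda_t^{(a,c)}-a\lambda^\star\bigr|
    \leq
    a\bigl|\widehat\lambda_t-\lambda_{N,t}\bigr|
    +a\bigl|\lambda_{N,t}-\lambda^\star\bigr|
    +|c|\left|
        \frac{\sum_{s}\widetilde Z_s}{\sum_s\widetilde Z_s^2}
        +\frac{t^{-1}\sum_s\delta_N(x_s)}{\sigma_Z^2+t^{-1}\sum_s\delta_N^2(x_s)}
    \right|
    +|c|\,\frac{\bigl|t^{-1}\sum_s\delta_N(x_s)\bigr|}{\sigma_Z^2+t^{-1}\sum_s\delta_N^2(x_s)}.
\]
The first term is at most \(a\,C\sqrt{\log(T/\alpha)/t}\) by Theorem~\ref{theorem:online-coefficient-concentration}. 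The second is at most \(aq_N\) by the proof of Corollary~\ref{corollary:online-coefficient-oracle-gap}. The last is at most \(|c|e_N/\sigma_Z^2\), because the denominator is at least \(\sigma_Z^2\) and \(|\delta_N|\leq e_N\). The only new work is therefore the third term. This is a ratio concentration in which the shift constant \(c\) acts like an extra ``response'' identically equal to one.

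For the third term, I would add one more martingale to Lemma~\ref{lemma:online-moment-concentration}:
\[
    \Delta_{3,t}:=\frac1t\sum_{s=1}^t\bigl(\widetilde Z_s-\E[\widetilde Z_s\mid\F_{s-1}]\bigr)=\frac1t\sum_{s=1}^t\varepsilon_s^Z,
\]
using \(\E[\widetilde Z_s\mid\F_{s-1}]=-\delta_N(x_s)\). The increments are conditionally Gaussian with variance \(\sigma_Z^2\), so the same Bernstein/sub-Gaussian bound applies. A union bound over three moments and \(t\leq T\), with \(\log(6T/\alpha)\) in place of \(\log(4T/\alpha)\), gives \(|\Delta_{j,t}|\leq b_t\) for \(j=1,2,3\). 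Writing \(\bar\delta_t:=t^{-1}\sum_s\delta_N(x_s)\) and \(D_t:=\sigma_Z^2+t^{-1}\sum_s\delta_N^2(x_s)\), the ratio identity analogous to \eqref{eq:appendix-ratio-identity} becomes
\[
    \frac{\sum_s\widetilde Z_s}{\sum_s\widetilde Z_s^2}-\frac{-\bar\delta_t}{D_t}
    =\frac{\Delta_{3,t}+(\bar\delta_t/D_t)\Delta_{2,t}}{t^{-1}\sum_s\widetilde Z_s^2}.
\]
On the event and for \(t\geq C_2\log(T/\alpha)\), the denominator is at least \(\sigma_Z^2/2\) by \eqref{eq:appendix-empirical-denominator-lower-bound}. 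Also \(|\bar\delta_t|/D_t\leq e_N/\sigma_Z^2\). So the third term is at most \(|c|\,(2b_t/\sigma_Z^2)(1+e_N/\sigma_Z^2)\).

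To finish, I would absorb the linear part of \(b_t\) into its square-root part in the stated range, and use \(\log(6T/\alpha)\leq 4\log(T/\alpha)\) for \(T\geq2\). Enlarging \(C\), which then depends only on \(B\), \(e_N\), and \(\Sigma\), to dominate both the \(a\)-coefficient and the \(|c|\)-coefficient yields \((a+|c|)C\sqrt{\log(T/\alpha)/t}\). All bounds hold on a single event of probability at least \(1-\alpha\), so they hold simultaneously over \(t\).

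The main obstacle is bookkeeping rather than conceptual. The event must be shared: Theorem~\ref{theorem:online-coefficient-concentration} cannot be invoked as a black box with its own failure probability and then combined with a separate event for \(\Delta_{3,t}\). Instead, I would rerun its proof on the enlarged three-moment event, so that the total failure probability is \(\alpha\) and not \(2\alpha\).
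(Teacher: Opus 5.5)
Your proof is correct. It takes a slightly different route from the paper in one step: how the shift term is concentrated.

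In effect the two arguments use the same decomposition. Since $\sum_s\widetilde Z_s/\sum_s\widetilde Z_s^2=\widehat\lambda_t^{(1,1)}-\widehat\lambda_t$ and $-\bar\delta_t/D_t=\lambda_{N,t}^{(1,1)}-\lambda_{N,t}$, the paper's triangle inequality for $|\widehat\lambda_t^{(1,1)}-\widehat\lambda_t|$ yields exactly your third and fourth terms.

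\textbf{How the third term is handled.} The paper does not reopen the martingale argument. It observes that the proof of Theorem~\ref{theorem:online-coefficient-concentration} uses $f$ only through $\|f\|_\infty\le B$. It then applies that theorem as a black box to $Y_s$ and to $Y_s+1$ (with $B+1$), each at level $\alpha/2$. Your third term is bounded by the sum of the two concentration radii, which gives the coefficient $(a+2|c|)C_0$. The martingale increment for $Y_s+1$ is the $\Delta_{1,t}$ increment plus $\varepsilon_s^Z$, so the paper concentrates your $\Delta_{3,t}$ implicitly. You instead add $\Delta_{3,t}=t^{-1}\sum_s\varepsilon_s^Z$ explicitly and prove a new ratio identity.

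\textbf{Trade-off.} Your version is more self-contained and makes the $|c|$-coefficient explicit: $(2/\sigma_Z^2)(1+e_N/\sigma_Z^2)$ times the $b_t$ constant. The paper's version is shorter and reuses the theorem verbatim, with no new algebra.

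\textbf{On your stated obstacle.} The obstacle you flag is self-imposed. Splitting the failure budget as $\alpha/2+\alpha/2$ is legitimate, and it is what the paper does. The resulting factor $\log(2T/\alpha)\le 2\log(T/\alpha)$ is absorbed into $C$ because $T\ge 2$. What actually blocks a naive black-box use within your decomposition is different. Theorem~\ref{theorem:online-coefficient-concentration} does not export the denominator bound $t^{-1}\sum_s\widetilde Z_s^2\ge\sigma_Z^2/2$ that your third term needs. The paper sidesteps this by rewriting that term as a difference of two quantities the theorem already controls.
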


\begin{proof}
The proof of Theorem~\ref{theorem:online-coefficient-concentration} uses objective regularity only through \(\|f\|_\infty\leq B\). The same argument applies to \(Y_s+1\), since
\(\|f+1\|_\infty\leq B+1\). Applying that argument to \(Y_s\) and \(Y_s+1\), with failure probability \(\alpha/2\) for each, gives a constant \(C_0>0\) such that, with probability at least \(1-\alpha\),
\[
    \max\left\{
        |\widehat\lambda_t-\lambda_{N,t}|,
        |\widehat\lambda_t^{(1,1)}-\lambda_{N,t}^{(1,1)}|
    \right\}
    \leq
    C_0\sqrt{\frac{\log(T/\alpha)}{t}}
\]
simultaneously for \(C_0\log(T/\alpha)\leq t\leq T\).

By \eqref{eq:appendix-response-transformation-identities}, the population shift satisfies
\[
    |\lambda_{N,t}^{(1,1)}-\lambda_{N,t}|
    =
    \frac{
        \left|t^{-1}\sum_{s=1}^t\delta_N(x_s)\right|
    }{
        \sigma_Z^2+t^{-1}\sum_{s=1}^t\delta_N^2(x_s)
    }
    \leq
    \frac{e_N}{\sigma_Z^2}.
\]
Combining this bound with the two concentration bounds and applying the triangle inequality gives
\begin{equation}
\label{eq:appendix-unit-shift-bound}
\begin{aligned}
    |\widehat\lambda_t^{(1,1)}-\widehat\lambda_t|
    &\leq
    |\widehat\lambda_t^{(1,1)}-\lambda_{N,t}^{(1,1)}|
    +|\lambda_{N,t}^{(1,1)}-\lambda_{N,t}|
    +|\lambda_{N,t}-\widehat\lambda_t|\\
    &\leq
    2C_0\sqrt{\frac{\log(T/\alpha)}{t}}
    +\frac{e_N}{\sigma_Z^2}.
\end{aligned}
\end{equation}
The empirical identity in \eqref{eq:appendix-response-transformation-identities} also gives
\[
    \widehat\lambda_t^{(a,c)}
    =
    a\widehat\lambda_t
    +c\bigl(
        \widehat\lambda_t^{(1,1)}-\widehat\lambda_t
    \bigr).
\]
Using \(|\lambda_{N,t}-\lambda^\star|\leq q_N\), we therefore obtain
\[
\begin{aligned}
    |\widehat\lambda_t^{(a,c)}-a\lambda^\star|
    &\leq
    a|\widehat\lambda_t-\lambda^\star|
    +|c|\,|\widehat\lambda_t^{(1,1)}-\widehat\lambda_t|\\
    &\leq
    (a+2|c|)C_0\sqrt{\frac{\log(T/\alpha)}{t}}
    +a q_N+\frac{|c|e_N}{\sigma_Z^2}.
\end{aligned}
\]
Taking \(C\geq2C_0\) and restricting to \(C\log(T/\alpha)\leq t\leq T\) proves \eqref{eq:appendix-response-transformation-bound}.
\end{proof}

On a fixed history, positive scaling rescales the estimated and oracle coefficients equally. Translation adds a sampling term that decreases with online sample size and a population shift bounded by \(|c|e_N/\sigma_Z^2\). Thus, for fixed \(a,c\), \eqref{eq:appendix-response-transformation-bound} retains the original convergence orders in online sample size and offline estimation error.

If only the objective mean is shifted or rescaled, the same coefficient error analysis applies with \(B\) replaced by \(aB+|c|\), while \(\Sigma\) and \(\lambda^\star\) remain unchanged.

\subsection{GP Confidence Bound}
\label{appendix:confidence-corrected-history}

Since \(\widehat\lambda_m\) is estimated from the first \(m\) observations and then applied to them, the confidence argument must hold uniformly over \(\mathcal I_m\).

Let \(\phi:\X\to\mathcal H_k\) be the canonical feature map of \(k\).  For \(n\geq1\), define
\begin{equation}
\label{eq:appendix-feature-covariance-operator}
    \mathbf V_n
    :=
    \overline\eta_m^2\mathbf I_{\mathcal H_k}
    +
    \sum_{s=1}^n
    \phi(x_s)\otimes\phi(x_s).
\end{equation}

\begin{lemma}[Uniform corrected-noise bound]
\label{lemma:uniform-corrected-noise}
Suppose Assumption~\ref{assumption:paired-fluctuations} holds.  With
probability at least \(1-\alpha/2\),
\begin{equation}
\label{eq:appendix-uniform-corrected-noise}
    \frac{1}{\overline\eta_m}
    \left\|
        \sum_{s=1}^n
        \phi(x_s)\varepsilon_s^\lambda
    \right\|_{\mathbf V_n^{-1}}
    \leq
    2
    \sqrt{
        \gamma_n\!\left(k,\overline\eta_m^2\right)
        +
        1
        +
        \log\!\left(\frac{4}{\alpha}\right)
    }
\end{equation}
holds simultaneously for every \(n=1,\ldots,T\) and
\(\lambda\in\mathcal I_m\).
\end{lemma}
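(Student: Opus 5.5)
The plan is to reduce the uniform-in-\(\lambda\) statement to two fixed self-normalized bounds by linearity in \(\lambda\). Write
\[
    \sum_{s=1}^n\phi(x_s)\varepsilon_s^\lambda
    =
    \sum_{s=1}^n\phi(x_s)\varepsilon_s^{\lambda^\star}
    -(\lambda-\lambda^\star)\sum_{s=1}^n\phi(x_s)\varepsilon_s^Z .
\]
This matches the remark in Subsection~\ref{subsec:confidence-bounds} that the oracle residual and the auxiliary noise are controlled separately. Under Assumption~\ref{assumption:paired-fluctuations}, \(\varepsilon_s^{\lambda^\star}=\varepsilon_s^Y-\lambda^\star\varepsilon_s^Z\) is conditionally \(\N(0,\eta_\star^2)\) given \(\F_{s-1}\). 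Likewise \(\varepsilon_s^Z\) is conditionally \(\N(0,\sigma_Z^2)\). Both are martingale-difference sequences adapted to \((\F_s)\), and \(x_s\) is \(\F_{s-1}\)-measurable regardless of the (possibly \(\widehat\lambda_m\)-dependent) query policy. Neither sequence depends on \(\lambda\), so the data-dependent choice \(\widehat\lambda_m\) causes no difficulty once both pieces are bounded uniformly in \(n\).

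For each piece I will invoke the self-normalized martingale inequality in the RKHS (Abbasi-Yadkori et al.; Chowdhury--Gopalan, Theorem 1), with regularizer \(\overline\eta_m^2\) and failure probability \(\alpha/4\). For an \(R\)-sub-Gaussian noise this gives, simultaneously for all \(n\),
\[
    \Bigl\|\sum_{s\le n}\phi(x_s)\epsilon_s\Bigr\|_{\mathbf V_n^{-1}}^2
    \le
    2R^2\log\!\Bigl(\tfrac{4}{\alpha}\det(\mathbf I+\overline\eta_m^{-2}\mathbf K_n)^{1/2}\Bigr)
    \le
    2R^2\bigl(\gamma_n(k,\overline\eta_m^2)+\log(4/\alpha)\bigr).
\]
The last step uses the definition \eqref{eq:def-maximum-information-gain} of \(\gamma_n\) at the realized points, together with the determinant identity between \(\mathbf V_n/\overline\eta_m^2\) and \(\mathbf I+\overline\eta_m^{-2}\mathbf K_n\). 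I intend to use the version with \(\mathbf V_n\) regularized by \(\overline\eta_m^2\) directly, to avoid the extra \(+1\) that arises in Chowdhury--Gopalan's \((1+\eta)\) regularization. The slack \(+1\) in \eqref{eq:appendix-uniform-corrected-noise} can then absorb any such constants. A union bound over the two pieces gives probability \(1-\alpha/2\).

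Combining the pieces by the triangle inequality gives
\[
    \Bigl\|\sum_{s=1}^n\phi(x_s)\varepsilon_s^\lambda\Bigr\|_{\mathbf V_n^{-1}}
    \le
    \bigl(\eta_\star+|\lambda-\lambda^\star|\sigma_Z\bigr)\sqrt{2(\gamma_n+\log(4/\alpha))}.
\]
On \(\mathcal I_m\) we have \(|\lambda-\lambda^\star|\le d_m\). Since \(\eta_\star+d_m\sigma_Z\le\sqrt2\sqrt{\eta_\star^2+\sigma_Z^2d_m^2}=\sqrt2\,\overline\eta_m\) by \eqref{eq:appendix-common-noise-envelope}, dividing by \(\overline\eta_m\) yields the bound \(2\sqrt{\gamma_n+\log(4/\alpha)}\), which is at most the right side of \eqref{eq:appendix-uniform-corrected-noise}.

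The main obstacle is the constant bookkeeping in the self-normalized inequality. The factor \(2\) is already spent as \(\sqrt2\cdot\sqrt2\), so the precise form of the inequality must be chosen carefully. I will first try the cleanest form, \(\|S_n\|^2_{\mathbf V_n^{-1}}\le 2R^2\log(\det(\mathbf I+\overline\eta_m^{-2}\mathbf K_n)^{1/2}/\delta)\) with \(\delta=\alpha/4\), obtained from the Gaussian-prior mixture supermartingale in feature space. If the RKHS version only gives a \((1+\eta)\)-regularized form, I will use the \(+1\) term together with \(\log\det\) comparisons to absorb the difference.
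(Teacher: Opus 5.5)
Your proposal is correct and follows the paper's proof essentially step for step. It uses the same split $\varepsilon_s^\lambda=(\varepsilon_s^Y-\lambda^\star\varepsilon_s^Z)-(\lambda-\lambda^\star)\varepsilon_s^Z$, a kernel self-normalized bound for each piece at level $\alpha/4$, the triangle inequality, and $\eta_\star+d_m\sigma_Z\le\sqrt2\,\overline\eta_m$. The only difference is cosmetic: the paper applies Chowdhury--Gopalan to the rescaled feature map $\phi/\overline\eta_m$ and carries the resulting $+1$, whereas you target the $\overline\eta_m^2$-regularized form directly, and either version gives the stated bound.
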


\begin{proof}
For every \(\lambda\), decompose the corrected noise into the oracle residual and an auxiliary-noise term:
\[
    \varepsilon_s^\lambda
    =
    \left(
        \varepsilon_s^Y-\lambda^\star\varepsilon_s^Z
    \right)
    -
    \left(
        \lambda-\lambda^\star
    \right)
    \varepsilon_s^Z.
\]
Conditional on \(\F_{s-1}\),
\(\varepsilon_s^Y-\lambda^\star\varepsilon_s^Z\) and
\(\varepsilon_s^Z\) are centered Gaussian with standard deviations
\(\eta_\star\) and \(\sigma_Z\), respectively.

Applying the kernel self-normalized inequality
\citep{chowdhury2017kernelized} to the rescaled feature map
\(\phi/\overline\eta_m\), separately for these two noise sequences, gives an
event with probability at least \(1-\alpha/2\) on which
\[
\begin{aligned}
    \frac{1}{\eta_\star}
    \left\|
        \sum_{s=1}^n
        \phi(x_s)
        \left(
            \varepsilon_s^Y-\lambda^\star\varepsilon_s^Z
        \right)
    \right\|_{\mathbf V_n^{-1}}
    &\leq
    \sqrt{
        2\left\{
            \gamma_n\!\left(k,\overline\eta_m^2\right)
            +
            1
            +
            \log\!\left(\frac{4}{\alpha}\right)
        \right\}
    },\\
    \frac{1}{\sigma_Z}
    \left\|
        \sum_{s=1}^n
        \phi(x_s)\varepsilon_s^Z
    \right\|_{\mathbf V_n^{-1}}
    &\leq
    \sqrt{
        2\left\{
            \gamma_n\!\left(k,\overline\eta_m^2\right)
            +
            1
            +
            \log\!\left(\frac{4}{\alpha}\right)
        \right\}
    }
\end{aligned}
\]
simultaneously for every \(n=1,\ldots,T\).

For \(\lambda\in\mathcal I_m\),
\(|\lambda-\lambda^\star|\leq d_m\).  Moreover,
\[
    \eta_\star+d_m\sigma_Z
    \leq
    \sqrt{
        2\left(
            \eta_\star^2+\sigma_Z^2d_m^2
        \right)
    }
    =
    \sqrt{2}\,\overline\eta_m.
\]
The triangle inequality therefore gives
\eqref{eq:appendix-uniform-corrected-noise}.
\end{proof}

\begin{proof}[Proof of
Theorem~\ref{theorem:confidence-online-correction}]
Fix \(\lambda\in\mathcal I_m\).  Let \(\mu_n^\lambda\) be the posterior
mean formed from \(\{(x_s,\widetilde Y_s(\lambda))\}_{s=1}^n\) using
\(\overline\eta_m^2\) as the noise parameter.  The corresponding posterior
standard deviation satisfies
\[
    s_n(x)
    =
    \overline\eta_m
    \|\phi(x)\|_{\mathbf V_n^{-1}}
\]
and does not depend on \(\lambda\). By
\eqref{eq:appendix-fixed-correction-model} and
\eqref{eq:offline-rkhs-error-bound},
\begin{equation}
\label{eq:appendix-fixed-target-rkhs-bound}
    \|h_\lambda\|_{\mathcal H_k}
    \leq
    B+|\lambda|D.
\end{equation}
The standard kernel-ridge decomposition gives
\begin{equation}
\label{eq:appendix-kernel-confidence-decomposition}
\begin{aligned}
    |h_\lambda(x)-\mu_n^\lambda(x)|
    \leq
    \left\{
        \|h_\lambda\|_{\mathcal H_k}
        +
        \frac{1}{\overline\eta_m}
        \left\|
            \sum_{s=1}^n
            \phi(x_s)\varepsilon_s^\lambda
        \right\|_{\mathbf V_n^{-1}}
    \right\}
    s_n(x).
\end{aligned}
\end{equation}
On the event in Lemma~\ref{lemma:uniform-corrected-noise},
\eqref{eq:appendix-fixed-target-rkhs-bound} and
\eqref{eq:appendix-kernel-confidence-decomposition} imply
\begin{equation}
\label{eq:appendix-uniform-fixed-coefficient-confidence}
\begin{aligned}
    |h_\lambda(x)-\mu_n^\lambda(x)|
    \leq
    \Bigg[
        B+|\lambda|D
        +
        2
        \sqrt{
            \gamma_n\!\left(k,\overline\eta_m^2\right)
            +
            1
            +
            \log\!\left(\frac{4}{\alpha}\right)
        }
    \Bigg]
    s_n(x)
\end{aligned}
\end{equation}
simultaneously for every \(n=1,\ldots,T\), \(x\in\X\), and
\(\lambda\in\mathcal I_m\).

By the construction in Subsection~\ref{appendix:proof-setup},
\(\widehat\lambda_m\in\mathcal I_m\) with probability at least
\(1-\alpha/2\). The intersection of this event with the event in Lemma~\ref{lemma:uniform-corrected-noise} has probability at least
\(1-\alpha\). On this intersection, set
\(\lambda=\widehat\lambda_m\) and \(n=t-1\).  Then
\[
    h_\lambda=h_m,
    \qquad
    \mu_n^\lambda=\mu_{t-1},
\]
and the right-hand side of
\eqref{eq:appendix-uniform-fixed-coefficient-confidence} equals
\(\sqrt{\beta_t}\,s_{t-1}(x)\).  This proves
\eqref{eq:confidence-stabilized-target} simultaneously for every
\(t=m+1,\ldots,T\) and \(x\in\X\).

The confidence event is uniform over \(\mathcal I_m\).  Hence,
\(\widehat\lambda_m\) may be estimated from the first \(m\) observations and
then applied to the full corrected history.
\end{proof}

\subsection{Cumulative Regret}
\label{appendix:cumulative-regret}

\begin{proof}[Proof of
Lemma~\ref{lemma:cumulative-posterior-width}]
The determinant identity for sequential GP updates gives
\[
    \sum_{t=m+1}^T
    \log\!\left(
        1+
        \frac{
            s_{t-1}^2(x_t)
        }{
            \overline\eta_m^2
        }
    \right)
    \leq
    \log\det\!\left(
        \mathbf I_T
        +
        \overline\eta_m^{-2}\mathbf K_T
    \right)
    \leq
    2\gamma_T\!\left(k,\overline\eta_m^2\right).
\]
The kernel normalization implies
\(s_{t-1}(x_t)\leq1\).  For every \(t>m\),
\[
\begin{aligned}
    s_{t-1}(x_t)
    \leq{}&
    \overline\eta_m
    \sqrt{
        2
        \log\!\left(
            1+
            \frac{
                s_{t-1}^2(x_t)
            }{
                \overline\eta_m^2
            }
        \right)
    }\\
    &+
    \frac{1}{\log 2}
    \log\!\left(
        1+
        \frac{
            s_{t-1}^2(x_t)
        }{
            \overline\eta_m^2
        }
    \right).
\end{aligned}
\]
When \(s_{t-1}(x_t)\leq\overline\eta_m\), the first term gives the
inequality by \(\log(1+z)\geq z/2\) for \(z\in[0,1]\).  When
\(s_{t-1}(x_t)>\overline\eta_m\), the second term is at least one and
therefore bounds \(s_{t-1}(x_t)\).

Summing this inequality and applying the Cauchy--Schwarz inequality gives
\[
\begin{aligned}
    \sum_{t=m+1}^T s_{t-1}(x_t)
    \leq{}&
    2\overline\eta_m
    \sqrt{
        (T-m)
        \gamma_T\!\left(k,\overline\eta_m^2\right)
    }\\
    &+
    \frac{2}{\log 2}
    \gamma_T\!\left(k,\overline\eta_m^2\right).
\end{aligned}
\]
This proves the lemma.
\end{proof}

When \(\overline\eta_m\) converges to a positive constant and
\(\gamma_T=o(T-m)\), the second term is lower order.  This regime is considered in the Matérn specialization below.

\begin{proof}[Proof of
Theorem~\ref{theorem:cumulative-regret}]
On the event in
Theorem~\ref{theorem:confidence-online-correction},
Assumption~\ref{assumption:kernel-objective} implies
\(|f(x)|\leq B\) for every \(x\in\X\).  Hence,
\[
    \sum_{t=1}^m r_t
    \leq
    2Bm.
\]
For every \(t>m\), the definition of \(h_m\) gives
\[
\begin{aligned}
    r_t
    &=
    h_m(x^\star)-h_m(x_t)
    +
    \widehat\lambda_m
    \left\{
        \delta_N(x_t)-\delta_N(x^\star)
    \right\}\\
    &\leq
    2\sqrt{\beta_t}\,s_{t-1}(x_t)
    +
    |\widehat\lambda_m|
    |\delta_N(x_t)-\delta_N(x^\star)|.
\end{aligned}
\]
The inequality follows from the confidence bound and the GP-UCB selection
rule.  Since \(\beta_t\) is nondecreasing, summing over the post-\(m\)
rounds yields
\begin{equation}
\label{eq:appendix-pathwise-regret-bound}
\begin{aligned}
    R_T
    \leq{}&
    2Bm
    +
    2\sqrt{\beta_T}
    \sum_{t=m+1}^T
    s_{t-1}(x_t)\\
    &+
    |\widehat\lambda_m|
    \sum_{t=m+1}^T
    |\delta_N(x_t)-\delta_N(x^\star)|.
\end{aligned}
\end{equation}
The transfer term depends on differences in the offline error between \(x_t\) and \(x^\star\). The uniform error bound gives
\[
    |\delta_N(x_t)-\delta_N(x^\star)|
    \leq
    2e_N.
\]
Combining Lemma~\ref{lemma:cumulative-posterior-width} with \eqref{eq:appendix-pathwise-regret-bound} gives \eqref{eq:general-cumulative-regret} for an absolute constant \(C\). Since the confidence event has probability at least \(1-\alpha\), the result follows.
\end{proof}

\subsection{Kernel-Specific Rates}
\label{appendix:kernel-specific-rates}

Write
\[
    p:=\frac{d}{2\nu+d}.
\]
The condition \(\nu>d/2\) implies \(p<1/2\).

\paragraph{Information gain.}
We bound information gain by decomposing each kernel matrix into a low-rank approximation and a small remainder. Let \(k\) be the restriction of a stationary Matérn kernel on \(\mathbb R^d\), with fixed hyperparameters. For any query sequence \(A=(x_1,\ldots,x_n)\in\X^n\), write
\[
    \mathbf K_A
    :=
    [k(x_i,x_j)]_{i,j=1}^n.
\]

Fix a closed ball \(\Omega\) containing \(\X\), and use the same kernel on \(\Omega\). For each sufficiently large integer \(r\), choose distinct sites \(z_1,\ldots,z_r\in\Omega\) such that every point of \(\Omega\) lies within \(C_1r^{-1/d}\) of a site, for a fixed constant \(C_1\). Define \(\mathbf C_r\in\mathbb R^{n\times r}\) and \(\mathbf W_r\in\mathbb R^{r\times r}\) by
\[
    (\mathbf C_r)_{ij}:=k(x_i,z_j),
    \qquad
    (\mathbf W_r)_{ij}:=k(z_i,z_j).
\]
The interpolation approximation and its remainder are
\[
    \mathbf L_r
    :=
    \mathbf C_r\mathbf W_r^{-1}\mathbf C_r^\top,
    \qquad
    \mathbf E_r
    :=
    \mathbf K_A-\mathbf L_r.
\]
The matrix \(\mathbf L_r\) is positive semidefinite, and the Schur complement gives \(\mathbf E_r\succeq0\). The uniform Matérn interpolation error bound \citep[Section~5]{santin2016approximation} gives
\begin{equation}
\label{eq:appendix-matern-interpolation-residual}
    \max_{1\leq i\leq n}(\mathbf E_r)_{ii}
    \leq C_0r^{-2\nu/d},
    \qquad r\geq r_0,
\end{equation}
where \(C_0\) and \(r_0\) are independent of the query sequence. Consequently, using \(k(x,x)\leq1\),
\[
    \operatorname{rank}(\mathbf L_r)\leq r,
    \qquad
    \operatorname{tr}(\mathbf L_r)\leq n,
    \qquad
    \operatorname{tr}(\mathbf E_r)
    \leq C_0nr^{-2\nu/d}.
\]
Factoring the determinant and applying \(\log\det(\mathbf I+\mathbf H)\leq\operatorname{tr}(\mathbf H)\) for \(\mathbf H\succeq0\), together with the rank and trace bounds, yields
\begin{equation}
\label{eq:appendix-matern-logdet-bound}
\begin{aligned}
    \log\det\!\left(\mathbf I_n+\eta^{-2}\mathbf K_A\right)
    &\leq
    \log\det\!\left(\mathbf I_n+\eta^{-2}\mathbf L_r\right)
    +\eta^{-2}\operatorname{tr}(\mathbf E_r)\\
    &\leq
    r\log\!\left(1+\frac{n}{r\eta^2}\right)
    +C_0\frac{n}{\eta^2}r^{-2\nu/d}.
\end{aligned}
\end{equation}
Set \(u=n/\eta^2\) and choose
\[
    r
    =
    \max\!\left\{
        r_0,\,
        \left\lceil
            \left(\frac{u}{\log(1+u)}\right)^p
        \right\rceil
    \right\}.
\]
Since \(2\nu/d=(1-p)/p\), substituting this choice into \eqref{eq:appendix-matern-logdet-bound} and maximizing over query sequences gives, for \(n\geq2\) and \(\eta^2>0\),
\begin{equation}
\label{eq:appendix-matern-information-gain}
    \gamma_n(k,\eta^2)
    \leq
    C\left\{
        \left(\frac{n}{\eta^2}\right)^p
        \log^{1-p}\!\left(1+\frac{n}{\eta^2}\right)
        +
        \log\!\left(1+\frac{n}{\eta^2}\right)
    \right\}.
\end{equation}
The constant \(C\) depends only on the fixed ball and kernel parameters, not on \(n\) or \(\eta^2\). This proves \eqref{eq:matern-information-gain} while retaining the dependence on the noise variance.

\begin{proof}[Proof of
Corollary~\ref{corollary:matern-regret-rate}] Consider the intersection of the offline event in Assumption~\ref{assumption:offline-estimation} and the two online events used to establish Theorem~\ref{theorem:cumulative-regret}: coefficient concentration and uniform GP confidence. This event has probability at least \(1-\alpha_N-\alpha\).

Since \(e_N\to0\), the concentration constants can be chosen uniformly in \(T\). At fixed confidence levels and for sufficiently large \(T\), \(m=\lceil\sqrt T\rceil\) lies in the range of Theorem~\ref{theorem:online-coefficient-concentration}, which gives
\[
    a_m
    =
    \widetilde O(T^{-1/4}).
\]
The prescribed offline rate and the definition of \(q_N\) give
\[
    q_N
    =
    O(e_N+e_N^2)
    =
    \widetilde O\!\left(
        T^{-\frac{1-p}{2}}
    \right).
\]
Since \(p<1/2\), these bounds imply \(a_m+q_N=\widetilde O(T^{-1/4})\). Corollary~\ref{corollary:online-coefficient-oracle-gap} and the definition of \(\overline\eta_m^2\) now give
\begin{equation}
\label{eq:appendix-matern-coefficient-rates}
\begin{aligned}
    |\widehat\lambda_m|
    &\leq
    |\lambda^\star|+a_m+q_N
    =
    O(1),\\
    \overline\eta_m^2-\eta_\star^2
    &=
    \sigma_Z^2(a_m+q_N)^2
    =
    \widetilde O(T^{-1/2}).
\end{aligned}
\end{equation}
For fixed \(\Sigma\) with \(\rho\in(-1,1)\), \(\eta_\star^2=\sigma_Y^2(1-\rho^2)>0\). Thus, \eqref{eq:appendix-matern-coefficient-rates} gives
\[
    \overline\eta_m^2
    =
    \sigma_Y^2(1-\rho^2)\{1+o(1)\}.
\]
Applying \eqref{eq:appendix-matern-information-gain} yields
\begin{equation}
\label{eq:appendix-matern-information-gain-rate}
\begin{aligned}
    \gamma_T\!\left(k,\overline\eta_m^2\right)
    &=
    \widetilde O\!\left(
        \overline\eta_m^{-2p}T^p
    \right)\\
    &=
    \widetilde O\!\left(
        (1-\rho^2)^{-p}T^p
    \right).
\end{aligned}
\end{equation}
The definition of \(\beta_T\), together with the bound \(|\widehat\lambda_m|=O(1)\) in \eqref{eq:appendix-matern-coefficient-rates}, gives
\[
    \sqrt{\beta_T}
    =
    O\!\left(
        1+
        \sqrt{
            \gamma_T\!\left(k,\overline\eta_m^2\right)
        }
    \right).
\]
Since \(T-m\asymp T\), substituting these bounds into the second term of \eqref{eq:general-cumulative-regret} gives
\begin{align}
    &\sqrt{\beta_T}
    \left\{
        \overline\eta_m
        \sqrt{
            (T-m)\gamma_T\!\left(k,\overline\eta_m^2\right)
        }
        +
        \gamma_T\!\left(k,\overline\eta_m^2\right)
    \right\}
    \nonumber\\
    &\qquad=
    \widetilde O\!\left(
        \overline\eta_m^{1-2p}T^{\frac12+p}
        +
        \overline\eta_m^{1-p}T^{\frac{1+p}{2}}
        +
        \overline\eta_m^{-2p}T^p
        +
        \overline\eta_m^{-3p}T^{\frac{3p}{2}}
    \right)
    \nonumber\\
    &\qquad=
    \widetilde O\!\left(
        (1-\rho^2)^{\frac12-p}T^{\frac12+p}
        +
        T^{\frac{1+p}{2}}
    \right).
\label{eq:appendix-matern-learning-rate}
\end{align}
Since \(p<1/2\) and \(\overline\eta_m\) converges to the positive constant \(\eta_\star\), the last three terms in the expansion are \(\widetilde O(T^{(1+p)/2})\), with constants that may depend on the fixed noise level
\(\eta_\star\).

The first and third terms in \eqref{eq:general-cumulative-regret} satisfy
\[
    2Bm
    =
    O(T^{1/2}),
    \qquad
    2(T-m)|\widehat\lambda_m|e_N
    =
    \widetilde O\!\left(
        T^{\frac{1+p}{2}}
    \right).
\]
Combining these bounds gives
\[
\begin{aligned}
    R_T
    &=
    \widetilde O\!\left(
        (1-\rho^2)^{\frac12-p}T^{\frac12+p}
        +
        T^{\frac{1+p}{2}}
    \right)\\
    &=
    \widetilde O\!\left(
        (1-\rho^2)^{
            \frac{2\nu-d}{2(2\nu+d)}
        }
        T^{
            \frac{2\nu+3d}{2(2\nu+d)}
        }
        +
        T^{
            \frac{\nu+d}{2\nu+d}
        }
    \right),
\end{aligned}
\]
where the second equality uses \(p=d/(2\nu+d)\). This proves \eqref{eq:matern-cumulative-regret-rate}.
\end{proof}
Taking \(\nu=5/2\) and \(d=2\) gives \eqref{eq:matern-five-halves-two-dimensional-rate}. The simple-regret rate in \eqref{eq:matern-simple-regret-rate} follows from \(S_T\leq R_T/T\).

\paragraph{Offline estimation.}
We construct an offline estimator of the auxiliary mean satisfying both \eqref{eq:offline-uniform-error} and \eqref{eq:offline-rkhs-error-bound}. Suppose that \(\X\) is compact and convex with nonempty interior and
\(g\in\mathcal H_k\). The offline data consist of noisy evaluations of \(g\) at \(N\) quasi-uniformly spaced locations in \(\X\). The observation errors are independent centered Gaussian variables with variance \(\sigma_Z^2\).

We obtain \(\bar g_N\) by Matérn kernel ridge regression, which minimizes the average squared fitting error plus a penalty on the squared RKHS norm. The Matérn RKHS coincides with the Sobolev space of order \(\nu+d/2\), with equivalent norms. With a regularization parameter of order \(N^{-1}\log N\), Corollary~8 and Theorem~8 of \citep{tuo2024asymptotic} give
\begin{equation}
\label{eq:appendix-matern-offline-rate}
    \|\bar g_N-g\|_\infty
    =
    O_{\mathbb P}\!\left(
        \left(\frac{\log N}{N}\right)^{\frac{\nu}{2\nu+d}}
    \right).
\end{equation}

To also satisfy \eqref{eq:offline-rkhs-error-bound}, fix a known bound \(G\geq\|g\|_{\mathcal H_k}\), independent of \(N\) and \(T\). Let \(\widehat g_N\) be a measurable minimizer of the uniform distance to \(\bar g_N\) over RKHS functions with norm at most \(G\):
\begin{equation}
\label{eq:offline-constrained-estimator}
    \widehat g_N
    \in
    \operatorname*{argmin}_{\|h\|_{\mathcal H_k}\leq G}
    \|h-\bar g_N\|_\infty.
\end{equation}
Since \(k\) is continuous and \(\X\) is compact, the feasible RKHS
ball is compact in the uniform norm. The objective is continuous in \(h\) and \(\bar g_N\), so the measurable maximum theorem yields a measurable minimizer. Since \(g\) is feasible, optimality gives \(\|\widehat g_N-\bar g_N\|_\infty \leq\|g-\bar g_N\|_\infty\). The triangle inequality then yields
\begin{equation}
\label{eq:offline-joint-error-bounds}
\begin{aligned}
    \|\widehat g_N-g\|_\infty
    &\leq 2\|\bar g_N-g\|_\infty,\\
    \|\widehat g_N-g\|_{\mathcal H_k}
    &\leq
    \|\widehat g_N\|_{\mathcal H_k}
    +\|g\|_{\mathcal H_k}
    \leq 2G.
\end{aligned}
\end{equation}

Thus, \(\widehat g_N\) attains \eqref{eq:matern-offline-supnorm-rate} and satisfies \eqref{eq:offline-rkhs-error-bound} with \(D=2G\). At a fixed offline confidence level, \(N=\widetilde\Omega(T)\) suffices for \eqref{eq:matern-rate-conditions}. The experiments use the unconstrained GP estimate described in Appendix~\ref{appendix:experimental-setup}.

\section{Additional Experimental Details}
\label{appendix:experimental-details}

This appendix provides the complete experimental protocol for the results in Section~\ref{sec:experiments}.  It also reports additional results and diagnostic checks.

\subsection{Experimental Setup}
\label{appendix:experimental-setup}

\paragraph{Benchmark functions.}
For the cumulative-regret experiments, we consider four standard
two-dimensional test functions with different optimization landscapes.  The low-dimensional setting allows us to study how noise reduction changes the GP sampling path over a long horizon under the continuous acquisition protocol. Ackley is oscillatory and multimodal, Branin has several separated basins, Rosenbrock contains a narrow curved valley, and Six-Hump Camel has multiple local optima. For the simple-regret experiments, we retain Branin and use higher-dimensional Ackley and Rosenbrock instances. We additionally include Hartmann and Levy, giving six problems spanning dimensions \(2\) to \(10\). Hartmann adds nonseparable interactions across coordinates, while Levy provides a higher-dimensional multimodal landscape. Let \(f_{\mathrm{raw}}\) denote the original minimization objective and define
\[
    f_{\mathrm{raw}}^\star
    :=
    \min_{x\in\X} f_{\mathrm{raw}}(x).
\]
We convert each benchmark to the reward function
\begin{equation}
\label{eq:experimental-reward-scaling}
    f(x)
    :=
    -\frac{
        f_{\mathrm{raw}}(x)-f_{\mathrm{raw}}^\star
    }{
        s_f
    },
\end{equation}
where \(s_f>0\) is a fixed scale.  This transformation preserves the optimal locations and gives \(f(x^\star)=0\). For the cumulative-regret experiments, we set \(s_f\) to \(5\), \(20\), \(200\), and \(20\) for Ackley, Branin, Rosenbrock, and Six-Hump Camel, respectively. Simple regret is reported on the original objective scale, corresponding to \(s_f=1\). Table~\ref{tab:experimental-benchmark-functions} reports the dimensions, domains, and global minima.
\begin{table}[!htbp]
    \centering
    \caption{Summary of the benchmark problems.  Global minima are rounded
    to four decimal places.  Detailed function definitions follow
    \citep{surjanovic2013virtual}.}
    \label{tab:experimental-benchmark-functions}
    \begin{tabular*}{\textwidth}{
        @{\extracolsep{\fill}}
        lccc
        @{}
    }
        \toprule
        Task
        & \(d\)
        & Domain \(\X\)
        & \(f_{\mathrm{raw}}^\star\) \\
        \midrule
        \multicolumn{4}{@{}l@{}}{\textbf{Panel A:} Cumulative-regret experiments} \\ \addlinespace[2pt]
        Ackley
        & \(2\)
        & \([-5,5]^2\)
        & \(0\) \\
        Branin
        & \(2\)
        & \([-5,10]\times[0,15]\)
        & \(0.3979\) \\
        Rosenbrock
        & \(2\)
        & \([-2.048,2.048]^2\)
        & \(0\) \\
        Six-Hump Camel
        & \(2\)
        & \([-3,3]\times[-2,2]\)
        & \(-1.0316\) \\
        \midrule
        \multicolumn{4}{@{}l@{}}{\textbf{Panel B:} Simple-regret experiments} \\ \addlinespace[2pt]
        Branin
        & \(2\)
        & \([-5,10]\times[0,15]\)
        & \(0.3979\) \\
        Hartmann
        & \(3\)
        & \([0,1]^3\)
        & \(-3.8628\) \\
        Ackley
        & \(4\)
        & \([-5,5]^4\)
        & \(0\) \\
        Rosenbrock
        & \(5\)
        & \([-5,10]^5\)
        & \(0\) \\
        Hartmann
        & \(6\)
        & \([0,1]^6\)
        & \(-3.3224\) \\
        Levy
        & \(10\)
        & \([-10,10]^{10}\)
        & \(0\) \\
        \bottomrule
    \end{tabular*}
\end{table}

\paragraph{Paired observation model.}
At each online query \(x_t\), we observe
\begin{equation}
\label{eq:experimental-paired-observations}
    Y_t
    =
    f(x_t)+\varepsilon_t^Y,
    \qquad
    Z_t
    =
    g(x_t)+\varepsilon_t^Z,
\end{equation}
where \(f\) is the reward function defined in \eqref{eq:experimental-reward-scaling}.  To isolate observation-level variance reduction from function-level transfer, we construct the shape of \(g\) independently of \(f\) and use the same normalized surface across all tasks.  Write the domain as \(\X=\prod_{j=1}^d[\ell_j,r_j]\), and define
\[
    u_j
    :=
    \frac{x_j-\ell_j}{r_j-\ell_j},
    \qquad
    j=1,\ldots,d.
\]
For \(d=2\), we set \(u_3=0\).  We use the fixed three-term Fourier surface
\begin{equation}
\label{eq:experimental-auxiliary-mean}
\begin{aligned}
    g(x)
    ={}&
    1    +0.45\sin\!\left(2\pi(u_1+0.17)\right)\\
    &+    0.30\cos\!\left(2\pi(1.7u_2+0.11)\right)\\
    &+
    0.20\sin\!\left(
        2\pi(0.9u_1+0.6u_2+0.4u_3)
    \right).
\end{aligned}
\end{equation}
This construction follows the use of sinusoidal features for smooth stationary kernels \citep{rahimi2007random}.  Its relative coefficients, frequencies, and phases are fixed across tasks and correlation levels and are not selected using the benchmark objectives. For the simple-regret
experiments, this surface is multiplied by the standard deviation of \(f_{\mathrm{raw}}\) on a fixed \(2^{14}\)-point Sobol design.

\paragraph{Noise model.}
Following Assumption~\ref{assumption:paired-fluctuations}, the paired errors are jointly Gaussian with zero mean and covariance \(\Sigma\) fixed across query locations and rounds. For the cumulative-regret experiments, we set \(\sigma_Y^2=4.25\) and \(\sigma_Z^2=1.0625\). For the simple-regret experiments, both noise standard deviations are scaled by the same factor as \(g\). Within each task, we vary within-pair correlation while holding both noise variances fixed.

\paragraph{Offline estimation.}
The offline data contain \(q=16\) independent auxiliary observations at each of \(M=1024\) Sobol locations, giving \(N=Mq=16384\) observations in total. We fit a Matérn-\(5/2\) GP on the normalized domain using the sample average at each Sobol location.  Its constant mean is estimated from the offline data, and its noise variance is set to \(\sigma_Z^2/q\). Its kernel parameters are selected by marginal likelihood before optimization, and the resulting estimate \(\widehat g_N\) remains fixed throughout the online run.

\paragraph{Online optimization.}
The cumulative-regret experiments use \(T=1000\) expensive evaluations including initialization. The simple-regret experiments use \(20\) initial queries followed by \(300\) adaptive evaluations. All experiments use a Matérn-\(5/2\) GP with lengthscale \(0.2\) and an empirical constant mean. We use unit output scale, rescaled to the objective units for the simple-regret experiments.

For AI-GPB, the empirical mean is computed from the corrected history, giving the posterior mean
\begin{equation}
\label{eq:experimental-gp-mean}
    \mu_{t-1}(x)
    =
    \bar y_{t-1}
    +
    \mathbf k_{t-1}(x)^\top
    \left(
        \mathbf K_{t-1}
        +\widehat\eta_{t-1}^2\mathbf I_{t-1}
    \right)^{-1}
    \left(
        \widetilde{\mathbf Y}_{t-1}^{(t)}
        -\bar y_{t-1}\mathbf 1
    \right),
\end{equation}
where \(\bar y_{t-1}:=(t-1)^{-1}\sum_{s=1}^{t-1}\widetilde Y_s^{(t)}\). The posterior variance is computed as in \eqref{eq:corrected-gp-posterior}.

All methods use the GP-UCB acquisition rule in \eqref{eq:experimental-acquisition}, with default exploration multiplier \(b=0.35\). The acquisition function is optimized by multistart L-BFGS-B initialized at Sobol points. The same optimization settings are used across methods within each task.

Results for the main experiments are averaged over \(20\) independent replications. Shaded bands indicate one standard error of the mean. Within each replication, all methods share the initial queries. We report both cumulative and simple regret across benchmarks.

\subsection{Additional Experiments}
\label{appendix:additional-experiments}
\paragraph{Online coefficient estimation.}
We examine how closely the coefficient learned by AI-GPB approaches \(\lambda^\star\) during online optimization. We use the four cumulative-regret tasks with \(\rho^2=0.8\) and \(M=1024\) offline locations. Figure~\ref{fig:coefficient-rmse} reports the root mean squared error across \(20\) independent runs at each online iteration,
\begin{equation}
\label{eq:rmse_def}
    \operatorname{RMSE}
    :=
    \sqrt{
        \frac{1}{20}
        \sum_{i=1}^{20}
        \left(\widehat\lambda^{(i)}-\lambda^\star\right)^2
    },
\end{equation}
where \(\widehat\lambda^{(i)}\) is the estimate from run \(i\).

\begin{figure}[!htbp]
    \centering
    \includegraphics[width=0.6\textwidth]{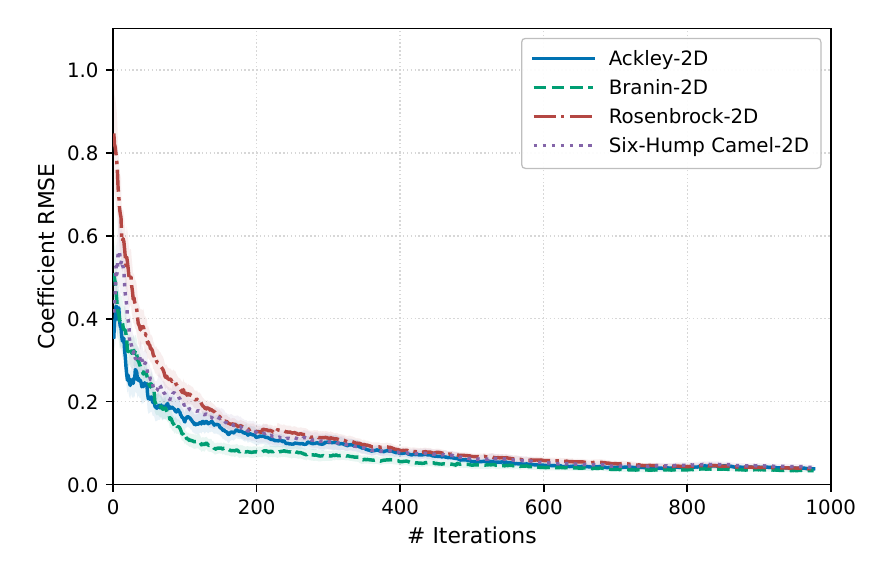}
    \caption{\textbf{Online coefficient estimation at \(\rho^2=0.8\).} Curves report RMSE relative to \(\lambda^\star\). Shading denotes one pointwise bootstrap standard error.}
    \label{fig:coefficient-rmse}
\end{figure}

Across all four tasks, coefficient estimation becomes more accurate during the online run. At the terminal horizon, the RMSE ranges from \(0.033\) to \(0.041\). By \eqref{eq:online-coefficient-oracle-gap}, the average excess noise variance is proportional to the squared RMSE, so the learned correction yields an average noise variance close to the oracle minimum.

\paragraph{Offline sample size.}
Figure~\ref{fig:offline-sample-size} shows how the offline sample size affects estimation accuracy and optimization performance. In panel (a), increasing the number of offline locations \(M\) reduces the maximum estimation error on a fixed evaluation grid, improving the centering used for online correction. Panel (b) reports the Ackley results, showing larger regret reductions at higher correlations and more consistent performance across runs with larger offline budgets. At \(\rho^2=0.8\), \(M=128\) already reduces mean cumulative regret by approximately \(43\%\) relative to Vanilla GP-UCB. Further increases in the offline budget bring mean regret close to the known-\(g\) reference.

\begin{figure}[!htbp]
    \centering
    \includegraphics[width=\textwidth]{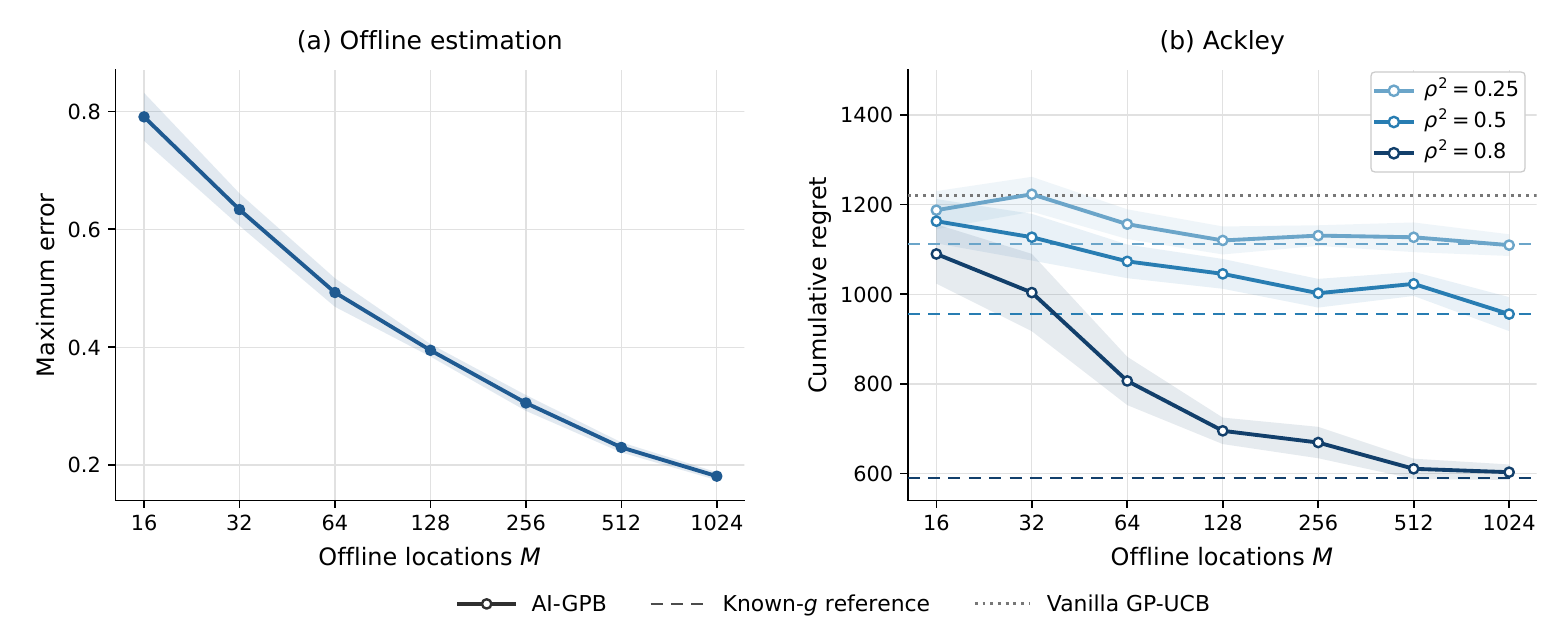}
    \caption{\textbf{Effect of offline sample size.}
    Curves report means over 20 replications, with \(N=16M\) and \(T=1000\). Shading denotes one standard error of the mean. Dashed lines show the known-\(g\) references at the corresponding correlation levels.}
    \label{fig:offline-sample-size}
\end{figure}

Table~\ref{tab:offline-sample-size} compares performance across all four tasks under different offline budgets. At \(M=128\), AI-GPB reduces mean cumulative regret by
\(35\%\)--\(50\%\) relative to Vanilla GP-UCB. The benefit of additional data varies across tasks, with Branin and Six-Hump Camel attaining their lowest mean regret at \(M=16\). The correction in \eqref{eq:def-corrected-outcome-estimated-g} links offline estimation to adaptive query selection, so the optimization benefit of additional offline data depends on both centering accuracy and the resulting search path.

\begin{table}[!htbp]
    \centering
    \caption{\textbf{Offline sample size at \(\rho^2=0.8\).}
    Entries report the ratio of mean cumulative regret to that of Vanilla GP-UCB over \(20\) independent runs.}
    \label{tab:offline-sample-size}
    \begin{tabular*}{\textwidth}{
        @{\extracolsep{\fill}}
        lcccc
        @{}
    }
        \toprule
        & \multicolumn{3}{c}{AI-GPB} & \\
        \cmidrule(lr){2-4}
        Task
        & \(M=16\)
        & \(M=128\)
        & \(M=1024\)
        & Known-\(g\) \\
        \midrule
        Ackley         & 0.894 & 0.570 & 0.494 & 0.483 \\
        Branin         & 0.444 & 0.497 & 0.597 & 0.572 \\
        Rosenbrock     & 0.698 & 0.573 & 0.594 & 0.564 \\
        Six-Hump Camel & 0.571 & 0.653 & 0.652 & 0.638 \\
        \bottomrule
    \end{tabular*}
\end{table}

\paragraph{Exploration sensitivity.}
We examine sensitivity to the GP-UCB exploration parameter, whose theoretical choice can be conservative in practice \citep{takeno2023randomized}. On the four tasks in Section~\ref{subsec:experimental-cumulative-regret}, we implement \eqref{eq:ppi-gp-ucb-acquisition} with
\(\beta_t=b\beta_t^{\mathrm{base}}\) and vary \(b\in\{0.10,0.35,0.50\}\):
\begin{equation}
\label{eq:experimental-acquisition}
    x_t
    \in
    \arg\max_{x\in\X}
    \left\{
        \mu_{t-1}(x)
        +
        \sqrt{b\beta_t^{\mathrm{base}}}\,s_{t-1}(x)
    \right\}.
\end{equation}
Here, \(\beta_t^{\mathrm{base}} =2\log\!\bigl(C\pi^2(t+1)^2/(6\delta)\bigr)\) is the baseline exploration schedule, with \(C=512\) and \(\delta=0.1\). Table~\ref{tab:exploration-sensitivity} reports the ratio of mean cumulative regret for AI-GPB to that for Vanilla GP-UCB at the same \(b\).

Across all four tasks and three exploration multipliers, AI-GPB achieves lower mean cumulative regret than Vanilla GP-UCB. At \(\rho^2=0.8\), the reductions range from \(28.1\%\) to \(52.6\%\), while gains at weaker correlation vary across tasks. The gains are therefore not specific to the default exploration multiplier.
\begin{table}[!htbp]
    \centering
    \caption{\textbf{Exploration sensitivity.}
    Entries report mean cumulative regret of AI-GPB relative to Vanilla GP-UCB at the same \(b\), based on \(10\) independent runs.}
    \label{tab:exploration-sensitivity}
    \begin{tabular*}{\textwidth}{
        @{\extracolsep{\fill}}
        lcccc
        @{}
    }
        \toprule
        & & \multicolumn{3}{c}{\(\rho^2\)} \\
        \cmidrule(lr){3-5}
        Task & \(b\) & \(0.25\) & \(0.5\) & \(0.8\) \\
        \midrule
        Ackley
        & \(0.10\) & 0.951 & 0.923 & 0.601 \\
        & \(0.35\) & \textbf{0.888} & \textbf{0.739} & \textbf{0.474} \\
        & \(0.50\) & 0.913 & 0.817 & 0.540 \\
        \addlinespace[4pt]
        Branin
        & \(0.10\) & \textbf{0.885} & \textbf{0.722} & 0.605 \\
        & \(0.35\) & 0.890 & 0.798 & \textbf{0.571} \\
        & \(0.50\) & 0.897 & 0.776 & 0.589 \\
        \addlinespace[4pt]
        Rosenbrock
        & \(0.10\) & \textbf{0.798} & 0.724 & \textbf{0.554} \\
        & \(0.35\) & 0.813 & \textbf{0.708} & 0.580 \\
        & \(0.50\) & 0.826 & 0.744 & 0.570 \\
        \addlinespace[4pt]
        Six-Hump Camel
        & \(0.10\) & 0.947 & 0.837 & 0.719 \\
        & \(0.35\) & 0.938 & \textbf{0.764} & 0.653 \\
        & \(0.50\) & \textbf{0.869} & 0.810 & \textbf{0.600} \\
        \bottomrule
    \end{tabular*}
\end{table}

\end{document}